\documentclass{article}
\PassOptionsToPackage{numbers, compress, sort}{natbib}



    \usepackage[preprint]{neurips_2025}



\usepackage[utf8]{inputenc} 
\usepackage[T1]{fontenc}    
\usepackage{hyperref}       
\usepackage{url}            
\usepackage{booktabs}       
\usepackage{amsfonts}       
\usepackage{nicefrac}       
\usepackage{microtype}      
\usepackage{xcolor}         


\usepackage[skip=0pt]{caption}

\setlength{\parskip}{4.6pt}    

\usepackage{amsmath}
\usepackage{amsthm}
\usepackage{amssymb}
\usepackage{subcaption}
\usepackage{wrapfig}
\usepackage{enumitem}
\usepackage{graphicx}
\usepackage{lipsum}
\usepackage{ifthen}
\usepackage{makecell}
\usepackage{xspace}
\usepackage{multirow}
\usepackage{multicol}
\usepackage{colortbl}
\usepackage{graphicx}

\usepackage{thmtools}
\usepackage{thm-restate}

\definecolor{mygreen}{RGB}{93,173,85}
\definecolor{darkred}{RGB}{204, 0, 0}
\definecolor{darkgreen}{RGB}{0, 153, 0}
\definecolor{cobalt}{rgb}{0.0, 0.28, 0.67}

\newcommand{\reshl}[2]{
\text{#1} \fontsize{4.5pt}{1em}\selectfont\color{mygreen}{$\!\uparrow\!$ \textbf{#2}}
}

\newcommand{\randomchar}{%
    \edef\case{\number\numexpr\pdfuniformdeviate 2\relax}%
    \ifcase\case
        \edef\temp{\number\numexpr\pdfuniformdeviate 26 + 97\relax}%
    \else
        \edef\temp{\number\numexpr\pdfuniformdeviate 26 + 65\relax}%
    \fi
    \symbol{\temp}%
}

\newcommand{\randname}{\texttt{RR-Cluster}\xspace}

\usepackage{threeparttable}
\makeatletter
\newcommand{\thickhline}{%
    \noalign {\ifnum 0=`}\fi \hrule height 0.6pt
    \futurelet \reserved@a \@xhline
}
\makeatother
\definecolor{mygray}{gray}{.9}
\definecolor{mygreen}{RGB}{93,173,85}
\usepackage{float}

\usepackage{multirow}
\usepackage{colortbl}
\definecolor{lightgray}{gray}{.9}
\definecolor{deepgray}{gray}{.8}
\newcolumntype{I}{!{\vrule width 1pt}}

\usepackage{tikz}
\usetikzlibrary{tikzmark,calc}

\setlength{\floatsep}{0pt}
\setlength{\intextsep}{0pt}

\renewcommand{\arraystretch}{0.9}

\usepackage[linesnumbered,ruled,noend]{algorithm2e}

\theoremstyle{plain}
\newtheorem{theorem}{Theorem}[section]

\newtheorem{lemma}[theorem]{Lemma}
\newtheorem{corollary}[theorem]{Corollary}
\theoremstyle{definition}
\newtheorem{definition}[theorem]{Definition}
\newtheorem{assumption}[theorem]{Assumption}
\theoremstyle{remark}


\title{Differentially Private Federated Clustering \\
with Random Rebalancing}

%

\author{
    Xiyuan Yang \\
    University of Illinois Urbana-Champaign \\
    \texttt{xiyuany4@illinois.edu} \\
\And
    Shengyuan Hu \\
    Carnegie Mellon University \\
    \texttt{shengyuanhu@cmu.edu} \\
\And
    Soyeon Kim \\
    KAIST \\
    \texttt{purplehibied@kaist.ac.kr} \\
\And
    Tian Li \\
    University of Chicago \\
    \texttt{litian@uchicago.edu}\\
}

\begin{document}

\maketitle

\begin{abstract}
Federated clustering aims to group similar clients into clusters and produce one model for each cluster. Such a personalization approach typically improves model performance compared with training a single model to serve all clients, but can be more vulnerable to privacy leakage. Directly applying client-level differentially private (DP) mechanisms to federated clustering could degrade the utilities significantly. We identify that such deficiencies are mainly due to the difficulties of averaging privacy noise within each cluster (following standard privacy mechanisms), as the number of clients assigned to the same clusters is uncontrolled. To this end, we propose a simple and effective technique, named \randname, that can be viewed as a light-weight add-on to many federated clustering algorithms. \randname achieves reduced privacy noise via \textit{\underline{r}andomly \underline{r}ebalancing} cluster assignments, guaranteeing a minimum number of clients assigned to each cluster. We analyze the tradeoffs between decreased privacy noise variance and potentially increased bias from incorrect assignments and provide convergence bounds for \randname. Empirically, we demonstrate that \randname plugged into strong federated clustering algorithms results in significantly improved privacy/utility tradeoffs across both synthetic and real-world datasets.
\end{abstract}

\section{Introduction}
\label{sec:intro}
Clustering is a critical approach for adapting to varying client distributions in federated learning (FL) by learning one statistical model for each group of similar clients~\citep{tian_fed_survey,kairouz2021advances}. 
It usually outperforms the canonical FL formulation of learning a single global model, as it outputs several models to serve heterogeneous client population~\citep{ifca}.  Federated clustering typically works by identifying clusters each client belongs to and incorporating model updates into the corresponding cluster models iteratively. 
However, performing clustering requires clients to transmit additional data-dependent information, making the federated learning system more vulnerable to privacy leakage. 

In this work, we aim to finally output $k$ clustered models such that the set of these $k$ models satisfies a global client-level differential privacy (DP)~\citep{dp,iclr18_dpfedavg}, with a trusted central server. Despite recent attempts privatizing federated clustering by DP~\citep{iotj_dpfedc, percom_dcfl}, we identify that a fundamental tension between DP and clustering still remains---DP tries to hide individual client's contributions to the output models, whereas clustering may expose more client information (i.e., client model updates), especially if the cluster size is small (or even one). We need much larger effective noise to privatize those clients that have a huge impact on their assigned clusters, which can degrade the overall privacy/utility tradeoffs. For instance, we observe significant utility drops if directly applying standard DP mechanisms on top of existing federated clustering algorithms (Figure~\ref{fig:intro_acc} and Section~\ref{sec:exps}).


\begin{figure}[t]
    \centering
    \begin{minipage}{0.34\textwidth}
        \centering
        \includegraphics[width=\linewidth]{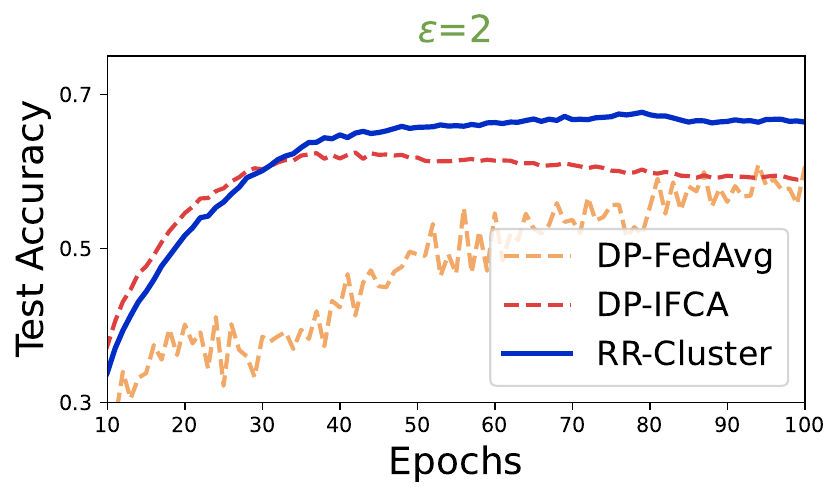}
    \end{minipage}
    \hfill
    \begin{minipage}{0.32\textwidth}
        \centering
        \includegraphics[width=\linewidth]{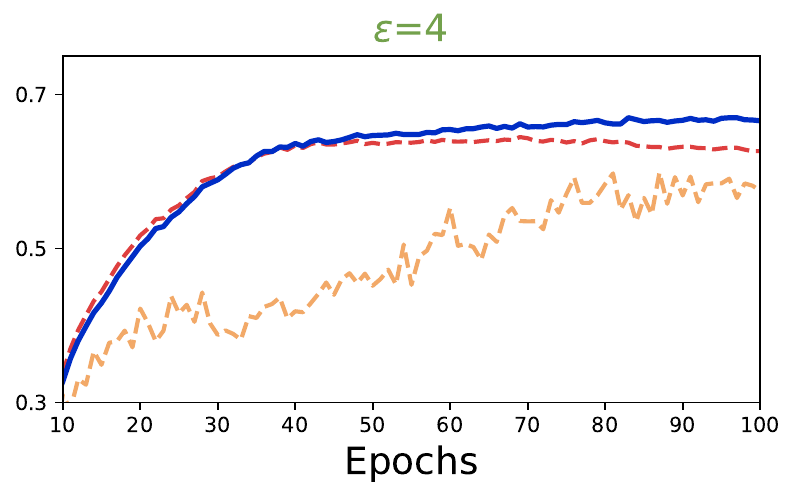}
    \end{minipage}
    \hfill
    \begin{minipage}{0.32\textwidth}
        \centering
        \includegraphics[width=\linewidth]{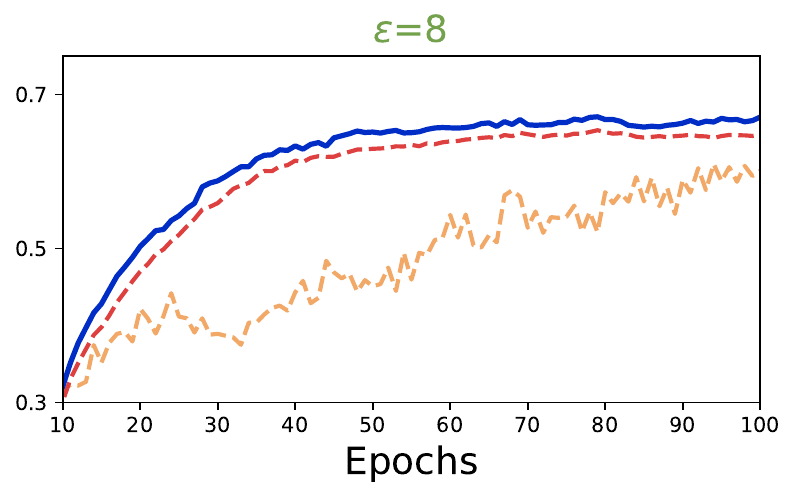}
    \end{minipage}
    \caption{\small Test accuracy on FashionMNIST~\citep{fashionmnist} with a neural network model. 
    Directly privatizing previous clustering method~\citep{ifca} (DP-IFCA) degrades model performance significantly, even underperforming the non-personalized approach (DP-FedAvg~\citep{iclr18_dpfedavg}) for some privacy budgets. The proposed \randname plugged into IFCA achieves higher accuracy in private settings under various $\varepsilon$'s.
    }
    \label{fig:intro_acc}
\end{figure}

Inspired by these insights, we propose a simple and light-weight technique, \randname, that prevents too small clusters by randomly sampling a subset of clients' model updates belonging to large clusters into small ones, thus guaranteeing each cluster having at least $B$ assigned model updates at each round. Such data-independent random rebalancing step can be applied on top of various clustering algorithms where the server aggregates model updates within each cluster based on learnt cluster assignments, without extra privacy or communication costs. 
\randname has at least $B$ client model updates within each cluster, thus resulting in a smaller effective privacy noise after model averaging. 
We note that a key hyperparameter here is $B$, the lower bound of the number of clients assigned to each cluster. When $B=0$, it recovers any base clustering algorithm. When $B$ increases, we reduce privacy noise at the cost of potentially increasing clustering bias by moving clients from correct clusters to incorrect ones. Theoretically, we analyze the effects of $B$ in the convergence bounds when plugged into a competitive clustering algorithm~\citep{ifca} in federated settings. Empirically, we find that (1)~small values of $B$ can improve privacy/utility tradeoffs significantly, (2)~and that the potential clustering bias from \randname does not hurt the learning process too much, especially in cases where the original assignment before rebalancing may not be accurate. 
Our contributions are summarized as follows.
\begin{itemize}[leftmargin=*]
    \item We propose \randname, a simple technique to improve privacy/utility tradeoffs for federated clustering by randomly sampling model updates from large clusters to small ones (Section~\ref{sec:method_0}). It can be used as a light-weight add-on on top of various federated clustering algorithms.
    \item We theoretically analyze the privacy and convergence bounds of \randname in convex settings, revealing a tradeoff between reduced privacy noise and increased bias (Section~\ref{sec:theory}).
    \item Empirically, we demonstrate the effectiveness of \randname across both real-world and synthetic federated datasets under natural and controlled heterogeneous data distributions (Section~\ref{sec:exps}). \randname 
    outperforms vanilla private baselines by a large margin. 
\end{itemize}

\section{Related Works and Background}
\label{sec:related_work}
\paragraph{Federated Clustering and Personalization.}
Federated clustering is a critical approach for federated networks by learning one model for each group of similar clients, which can benefit various downstream tasks including personalization~\citep{dennis2021heterogeneity}. 
The IFCA method \citep{ifca} is one of the most popular clustering algorithm, where each client 1) computes the losses of all cluster models on the local dataset and 2) selects the cluster having model of minimum test loss as their belonged cluster. Other clustering approaches leverage different distance measures for the cluster assignment \citep{fesem} or incorporate an additional global model to improve performance \citep{fedcam}. 
While these clustering methods significantly enhance the performance of federated systems, challenges remain to guarantee differential privacy (DP) for the entire learning pipeline. In Section~\ref{sec:exps}, we show how plugging \randname into some of these clustering methods can effectively protect data privacy while achieving better performance relative to directly privatizing these clustering methods.

\paragraph{Differentially Private Distributed/Federated Clustering.}
It is critical to ensure (differential) privacy when clustering distributed and sensitive data, which has been studied extensively in prior works mostly for data analytics~\citep[e.g.,][]{dp_cluster,dp_cluster2,optim_dp_cluster,balcan2017differentially,chang2021locally,cohen2022scalable}. 
However, joint clustering and \textit{learning} algorithms under privacy constraints are generally less explored.
In the context of federated learning, existing works have considered private federated clustering in different settings. For instance, the DCFL algorithm \citep{percom_dcfl} combines a dynamic clustering method with DP under global DP. \citet{malekmohammadi2025} perform cluster assignments based on clients' model updates and loss values. 
We note that some works may provide insufficient privacy protection without privatizing all the raw-data-dependent intermediate information in the clustering process (e.g., cluster assignment identifiers)~\citep{iotj_dpfedc,ppcfl}. 
Despite the existence of private federated clustering algorithms, issues related to high sensitivity of averaged model updates that belong to  small clusters (thus resulting in large privacy noise) still remain. 
Some works target at sample-level DP~\cite{liu2022privacy} or local DP~\citep{acsfl}, different from our setting focusing on client-level global DP with a trusted central server, as defined below.

\begin{definition}[Differential privacy~\citep{dp}] \label{def:dp}
    A randomized algorithm $\mathcal{M}: U \to \mathbb{R}^{k \times d}$ is $(\varepsilon, \delta)$-differentially private if for all adjacent datasets $D, D' \in U$ and the algorithm output $O \in \mathbb{R}^{k \times d}$, 
    \setlength{\abovedisplayskip}{5pt}
    \setlength{\belowdisplayskip}{1pt}
    $$
        \Pr(\mathcal{M}(D)\in O) \leq e^\varepsilon \cdot \Pr(\mathcal{M}(D')\in O) + \delta.
    $$
\end{definition}
\vspace{-5pt}
For client-level global DP, we define adjacent dataset $D'$ and $D$ by adding or removing any single client, which is a commonly-used privacy notion in federated settings~\citep{iclr18_dpfedavg}. In the clustering setting, we consider the output $\mathcal{M}(D) \in \mathbb{R}^{k \times d}$ to be $k$ clustered models, each in a $d$ dimensional space. 

Additionally, we mainly adopt  R\'enyi differential privacy (RDP)~\citep{renyi_dp} for privacy accounting. RDP is a generalization of DP based on  R\'enyi divergence, which offers an easier accounting process than $(\epsilon,\delta)$-DP~\citep{renyi_dp}. For completeness, we introduce a formal definition of RDP in Definition~\ref{def:rdp}, and restate existing RDP-to-DP conversion theorem in Theorem~\ref{thm:rdp_to_dp} in the Appendix.
\begin{definition}
\label{def:rdp}
    ($(\lambda,\epsilon)$-R\'enyi Differential Privacy~\citep{renyi_dp}) A randomized mechanism $\mathcal{M}: U \to \mathbb{R}^{k \times d}$ is  $(\lambda,\epsilon)$-RDP if for all adjacent datasets $D, D' \in U$ and the algorithm output $O \in \mathbb{R}^{k \times d}$,
    $$
        D_{\lambda}(\mathcal{M}(D)||\mathcal{M}(D'))=\frac{1}{\lambda-1}\log \mathbb{E}_{o\sim \mathcal{M}(D)}\left[\left(\frac{\Pr(\mathcal{M}(D)=o)}{\Pr(\mathcal{M}(D')=o)}\right)^{\lambda-1}\right] \leq \epsilon.
    $$
\end{definition}

\section{Differentially Private Federated Clustering with Random Rebalancing} \label{sec:method_0}

In this section, we start with the problem formulation of federated clustering (Section~\ref{sec:method:formulation}). Then, we introduce the general framework of \randname and present an instantiation of \randname by plugging it into a popular clustering method IFCA~\citep{ifca} (Sections~\ref{sec:method} and~\ref{subsec:ours_ifca}). Finally, we discuss tradeoffs introduced by \randname around privacy noise reduction and  clustering bias (Section~\ref{sec:method:tradeoff}). 

\subsection{Problem Formulation} \label{sec:method:formulation}

In this work, we study the common federated clustering objective, i.e., grouping $M$ clients into $k$ groups and outputting one model for each group. 
We assume there are $k$ underlying data distributions $\{\mathcal{D}^0, {\cdots}, \mathcal{D}^{k-1}\}$, and the $M$ client indices are partitioned into $k$ disjoint sets $\{S_0, {\cdots}, S_{k-1}\}$ indicating which clusters (distributions) clients belong to. Local data $D_i$ of client $i \in [M]$ follow the distribution $\mathcal{D}^j$ if $i \in S_j$.
Let $f(\theta; z)$ be the loss function with model parameter $\theta \in \mathbb{R}^{d}$ and data point $z$. The optimization objective of cluster $j$  is:
\setlength{\abovedisplayskip}{4pt}
\setlength{\belowdisplayskip}{4pt}
\begin{equation}
    F^j(\theta) :=\mathbb{E}_{z \sim \mathcal{D}_j}[f(\theta; z)], \quad j \in [k]. \nonumber
\end{equation}
We search for optimal parameters $\hat\theta_j^*$ for each cluster $j {\in} [k]$ as
    $\hat\theta_j^* :=\mathop{\arg\min}\limits_{\hat\theta} F^j(\hat\theta)$.
For each client $i \in [M]$, we define their local empirical loss $F_i(\theta)$ as 
    $F_i(\theta) := \frac{1}{|D_i|} \sum_{z \in D_i} f(\theta; z)$.
In practice, $F^j (\theta)$ is often approximated by the empirical version
$
    {F}^j(\theta) := \frac{1}{|S_j|} \sum_{ i \in S_j} F_i(\theta).
$
For federated clustering algorithms, the server typically maintains cluster models $\{\hat{\theta}_j\}_{j \in [k]}$, which are periodically aggregated from corresponding client models based $\{S_j\}_{j \in [k]}$. 
Different clustering algorithms can employ different methods to iteratively map clients to their corresponding clusters. For instance, IFCA~\cite{ifca} uses loss values as proxies for such assignment.

\subsection{\randname Algorithm}
\label{sec:method}
\vspace{-0.05in}

We introduce the differentially private federated clustering framework and the proposed \randname method. As \randname is compatible with many clustering methods, we present a general algorithm to illustrate the main idea in this section and present an instantiation with IFCA \citep{ifca} in Section~\ref{subsec:ours_ifca}. 

At a high level, federated clustering jointly learns $k$ cluster model parameters and updates cluster assignments of clients, as summarized in Algorithm~\ref{alg:pseudo-code-general}. In the $t$-th round, server samples $qM$ clients, and sends the current $k$ cluster models to each of them.
Each client $i$ identifies the cluster they belong to and optimizes that cluster model's parameters using local data. They then send the model updates $\Delta \theta_i^t$ to the server as well as other information necessary for clustering (Line~\ref{op:sendback}). Without our proposed plug, the server would identify model updates corresponding to each cluster (Line~\ref{op:alg1_updateS}), privatize and aggregate those updates within each cluster (Line~\ref{op:server_clip_theta} to Line~\ref{op:update_cluster}), and output new cluster models $\{\hat{\theta}_j^{T}\}_{j\in[k]}$. This procedure directly privatizes existing federated clustering algorithms.

\begin{algorithm}[t]
\SetAlgoLined
\DontPrintSemicolon
\SetNoFillComment
\setlength{\abovedisplayskip}{1pt}
\setlength{\belowdisplayskip}{1pt}
\setlength{\abovedisplayshortskip}{1pt}
\setlength{\belowdisplayshortskip}{1pt}
\begin{tikzpicture}[remember picture, overlay]
        \draw[line width=0pt, draw=red!30, rounded corners=2pt, fill=red!20, fill opacity=0.3]
            ([xshift=40pt,yshift=3pt]$(pic cs:a)+(338pt,-110pt)$) rectangle
            ([xshift=-5pt,yshift=0pt]$(pic cs:b)+(17pt,-145pt)$);
\end{tikzpicture}
\KwIn{total communication round $T$, number of clusters $k$, number of clients $M$, init clusters model $\{\hat{\theta}_j^0\}_{j \in [k]}$, 
client sampling rate $q$, parameter clipping bound $C_\theta$, rebalancing threshold $B~(1 \leq B \leq \frac{qM}{k})$}

\caption{The Proposed Method: \randname}
\label{alg:pseudo-code-general}
    \For{$t=0, \cdots, T-1$}{

        {S}erver samples a subset of clients $M^t$ with probability $q$ 
        \label{op:sample}

        Server sends $\{\hat{\theta}_j^t\}_{j \in [k]}$ to sampled clients
        \label{op:send}
        
        {C}lient $i \in M^t$ sends back model updates $\Delta {\theta}_i^t$ and other information ${s}_i$
        \label{op:sendback}

        Server privatizes  $\{s_i\}_{i \in M^t}$ into
        $\{\widetilde{s}_i^t\}_{i \in M^t}$  if $\{s_i\}_{i \in M^t}$ directly queries raw client data

        Server partitions $M^t$ into $k$ groups  (with client indices in $\{S_j^t\}_{j \in [k]}$) based on $\widetilde{s}_i$ 
        \label{op:alg1_updateS}

        Server determines large/small groups comparing $|S_j^t|$ with $B$, $j \in [k]$


        Server {samples} clients from large groups to insert into small ones, i.e.,  rebalancing cluster assignments s.t. $|S_j^t| \geq B$ for all $j\in [k]$
        \label{op:alg1_rebalance}

        Server clips client updates 
        $\Delta\bar{\theta}_i^t \gets \frac{\Delta\theta_i^t}{\max (1,\lVert s\lVert/C_\theta)}, ~{i\in M^t}$ \label{op:server_clip_theta}
        
        Server aggregates client updates
        $\Delta\widetilde{\theta}_j^t \gets \frac{1}{|S_j^t|}\left(\sum_{i \in S_j^t}\Delta\bar{\theta}^t_{i}+\mathcal{N}(0,(2C_\theta)^2\sigma_\theta^2)\right), ~{j \in [k]}$ \label{op:server_agg}
        
        Server uses noisy updates to update cluster models $ \hat{\theta}_j^{t+1} \gets \hat{\theta}_j^{t} + \gamma \Delta\widetilde{\theta}_j^t, ~{j\in [k]}$
        \label{op:update_cluster}
        
    }
 
    \Return{cluster models $\{\hat{\theta}_j^{T}\}_{j\in[k]}$}
\vspace{-2pt}
\end{algorithm}

However, one fundamental challenge associated with these clustering algorithms is that the cardinality of the set of model updates corresponding to each cluster  (i.e., $|S_j^t|, j \in [k]$) is not controlled. Therefore, the privacy noise needed to  hide client model updates within each cluster (i.e., effective noise added after averaging model updates) could be large. To address this issue, we propose a simple and effective plug-in (highlighted in \textcolor{red}{red}) that uniformly randomly samples model updates from large groups (i.e., $|S_j^t| > B$) and move them to small ones. We guarantee that each cluster has a minimum of $B$ client model updates ($1 \leq B \leq {qM}/{k}$), reducing privacy noise by averaging across at least $B$ updates to update $\hat{\theta}_j^t$.  The resulting private clustering framework is named \randname. 
Note that we can set $B$ so that any client update can only be sampled and reassigned to another cluster at most once.
Note that given the modularity of \randname, it can preserve the communication-efficiency of any federated clustering method that it is plugged into.
Furthermore, we discuss the tradeoffs introduced by the rabanlancing threshold $B$ in Section~\ref{sec:method:tradeoff}.

\subsection{\randname Plugged into IFCA}
\label{subsec:ours_ifca}
\vspace{-5pt}

Having established the high-level structure of \randname, we now plug the random rebalancing idea into one popular clustering method IFCA~\citep{ifca} as an example, presented in Algorithm~\ref{alg:pseudo-code}. We also empirically evaluate the performance of \randname plugged into other federated clustering algorithms in Section~\ref{sec:exps}.  
In IFCA, at each iteration, each client independently determines the cluster they belong to by selecting the model with the smallest loss evaluated using local data, as shown in Line~\ref{op:select_cluster}. 
Each selected client $i$ additionally communicates their cluster identifier to the server organized as a one-hot vector $s_i\in\mathbb{R}^k$, which is privatized along with model parameters. 

\setlength{\textfloatsep}{9pt}
\begin{algorithm}[!t]
\SetAlgoLined
\DontPrintSemicolon
\SetNoFillComment
\setlength{\abovedisplayskip}{0pt}
\setlength{\belowdisplayskip}{0pt}
\setlength{\abovedisplayshortskip}{0pt}
\setlength{\belowdisplayshortskip}{0pt}
\begin{tikzpicture}[remember picture, overlay]
        \draw[line width=0pt, draw=red!30, rounded corners=2pt, fill=red!20, fill opacity=0.3]
            ([xshift=40pt,yshift=3pt]$(pic cs:a)+(343pt,-125pt)$) rectangle
            ([xshift=-5pt,yshift=0pt]$(pic cs:b)+(17pt,-183pt)$);
\end{tikzpicture}
\KwIn{The same as Algorithm~\ref{alg:pseudo-code-general}}
\caption{\randname (IFCA)}
\label{alg:pseudo-code}

    \For{$t=0, \cdots, T-1$}{

        Same steps as in Algorithm~\ref{alg:pseudo-code-general}, Line \ref{op:sample} - Line \ref{op:send}
                
        \For{client $i \in M^t$ in parallel}{
        
            Determines its cluster: $\hat{j}$ $\gets$ $\underset{j\in [k]}{\arg\min}~F(\hat{\theta}^t_j, D_i)$ 
            \label{op:select_cluster}
            and embeds $\hat{j}$ into one-hot: $s_i^t$ $\gets$ $\mathbf{1}_{\{j=\hat{j}\}}$ 
            
            $\theta^t_{i}\gets \text{Local Training}(D_i, \hat{\theta}^t_{\hat{j}})$, ~ $\Delta \theta^t_{ i} \gets \theta^t_{ i} - \hat{\theta}_{\hat{j}}^t$ 
            \label{op:get_update}

            Send $\Delta{\theta}^t_{i}$ and ${s}_i^t$ back to server
        }

    Server privatizes $s_i$ as $\widetilde{s}_i^t \gets \frac{s_i^t}{\max (1,\lVert s\lVert/C_s)}+\mathcal{N}(0,C_s^2\sigma_s^2)$ for ${i\in M^t}$ 
    
    Server updates cluster assignments: 
    $S_j^t.\texttt{append(}i\texttt{)} ~\text{where~} {j} \gets \underset{j\in [k]}{\arg\max}~\widetilde{s_i}[j]$ for $ i \in M^t$
    
    Server determines large/small groups 
    $j_l = \{j \big{|}\left|S_j^t\right|\geq B, j \in [k]\};~j_s = [k] \setminus j_l$ 
    \label{op:determine_lm}

     Server samples client indices assigned to large clusters $\bigcup_{j \in j_l} \{S_j^t\}$ uniformly at random to expand small ones $\{S_j^t\}_{j\in j_s}$ such that $\left|S_j^t\right| = B$ for $j \in j_s$

    Same privatizes $\Delta\theta_i^t$ and update cluster model $\hat{\theta_j^t}$ as in Algorithm~\ref{alg:pseudo-code-general}, Line \ref{op:server_clip_theta} - Line \ref{op:update_cluster}
        
 }
    \Return{cluster models $\{\hat{\theta}_j^{T}\}_{j\in[k]}$}
    \label{op:return}
    \vspace{-2pt}
\end{algorithm}

\paragraph{Other Differentially Private Mechanisms.}
There are various privatization algorithms for model parameters $\theta$ and cluster identifier $s_i$'s. \randname is agnostic of the specific choice of private mechanisms. For $\theta \in \mathbb{R}^{d}$, it is a common practice to use the Gaussian mechanism after clipping to upper bound the $L_2$ norm of parameters. However, for cluster identifiers $s_i^t$, which is a $k$-dimensional one-hot discrete vector, it is also reasonable to adopt the exponential mechanism. 
Though we use the Gaussian mechanism in experiments (Section~\ref{sec:exps}), \randname achieves better performance regardless of how to  privatize identifiers, as it reduces  DP noise for model parameters. 

\subsection{Effects of $B$: Tradeoffs Between Privacy Noise  
and Clustering Bias} \label{sec:method:tradeoff}
Our method mainly benefits from the reduction of privacy noise after inserting model updates sampled from large groups $\{S_j\}_{j \in j_l}$ into the small ones $\{S_j\}_{j \in j_s}$ (Line 11). However, this approach may introduce bias if the updates that are initially correctly assigned to large groups are incorrectly re-assigned to one of the small ones. 
We discussed this issue both theoretically (Section~\ref{sec:tradeoff}) and empirically through a case study on synthetic data (Appendix~\ref{app:side_effect}). Empirically, we observe that benefits of reduced privacy noise outweigh potential bias. 

Additionally,  we note that the simple random rebalancing step may not introduce much error or bias, as the initial cluster assignment (before rebalancing) may be incorrect anyway, which is a common issue for (federated) clustering sometimes known as model collapse~\cite{motley}.
That is, the clustering algorithm starts with $k$ models but only a subset of them get effectively trained and the solutions get gradually biased towards those useful models. As empirically demonstrated in Table~\ref{tab:synthetic_additional}, a subset of models can collapse after training. 
As our method maintains a minimum number of updates assigned to each cluster, one side effect is that it ensures each cluster receives some updates and thus would not be under-trained even if the updates may be slightly biased. 
Though \randname is designed for private training, in Section~\ref{sec:exps}, we show that our method remains competitive or even outperforms other baselines in non-private training as well, due to this side effect.

\section{Theoretical Analysis} \label{sec:theory}

\subsection{Privacy Analysis}
\label{sec:privacy_analysis}
We state the privacy guarantees for the proposed method Algorithm~\ref{alg:pseudo-code-general} in terms of user-level global DP and RDP. In each communication round, conditioned on a randomly sampled subset of clients, there are two other random processes: (1) privatizing cluster identifiers and (2) privatizing model parameters. Hence, the privacy loss per round is accumulated from the composition of these two processes and the client subsampling step. First, we have the following proposition.

\begin{restatable}{proposition}{rdpsingleround}
    At each  round of \randname, given a subset of selected clients $M^t$ as input, a randomized mechanism $\mathcal{M}$ that outputs $k$ new clustered models satisfies $(\alpha, \varepsilon_1{+}\varepsilon_2)$-RDP if: (1) we set the noise scale for model parameters to $\sigma_\theta {=} \sqrt{\frac{\alpha}{2\varepsilon_2}}$ and (2) we use any cluster identifier privatization method that satisfies $(\alpha, \varepsilon_1)$-RDP. 
\label{thm:rdp_single_round}
\vspace{-0.05in}
\end{restatable}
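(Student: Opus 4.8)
The plan is to decompose the per-round mechanism into its two privacy-relevant components and then invoke composition for R\'enyi differential privacy. Fix a round $t$ and condition on the sampled subset $M^t$, so that $M^t$ is treated as a fixed input; adjacency then amounts to adding or removing a single client from $M^t$. The output of $\mathcal{M}$ is the tuple of $k$ new cluster models $\{\hat\theta_j^{t+1}\}_{j\in[k]}$, which is a post-processing of (i) the privatized cluster identifiers $\{\widetilde s_i^t\}_{i\in M^t}$ and (ii) the noisy aggregated model updates $\{\Delta\widetilde\theta_j^t\}_{j\in[k]}$. Since the rebalancing step (Line~\ref{op:alg1_rebalance}) uses only internal randomness and the already-privatized identifiers — it does not touch raw client data — it is pure post-processing and contributes no privacy loss.

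First I would handle component (ii), the model-parameter release. Each client update is clipped to $\ell_2$-norm at most $C_\theta$, so for a single cluster $j$ the sum $\sum_{i\in S_j^t}\Delta\bar\theta_i^t$ has $\ell_2$-sensitivity at most $C_\theta$ with respect to adding/removing one client; because the clusters after rebalancing are not disjoint (a reassigned client appears in two groups), I would use the fact — noted in the algorithm description — that $B$ is chosen so each update is reassigned at most once, hence any single client affects at most two of the $k$ summations, giving overall $\ell_2$-sensitivity $2C_\theta$ across the concatenated output $\mathbb{R}^{k\times d}$. Adding $\mathcal{N}(0,(2C_\theta)^2\sigma_\theta^2)$ to each aggregate is then the Gaussian mechanism at noise multiplier $\sigma_\theta$, which by the standard RDP guarantee for the Gaussian mechanism is $(\alpha,\alpha/(2\sigma_\theta^2))$-RDP. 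Setting $\sigma_\theta=\sqrt{\alpha/(2\varepsilon_2)}$ makes this exactly $(\alpha,\varepsilon_2)$-RDP. Here the averaging by $1/|S_j^t|$ is again post-processing and does not change the RDP bound. Component (i) is assumed by hypothesis to be $(\alpha,\varepsilon_1)$-RDP for whatever identifier privatization is used.

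Then I would combine the two via the RDP composition theorem: if one mechanism is $(\alpha,\varepsilon_1)$-RDP and a second (possibly adaptively chosen, which is fine since the partition $\{S_j^t\}$ depends on the privatized identifiers) is $(\alpha,\varepsilon_2)$-RDP, their composition is $(\alpha,\varepsilon_1+\varepsilon_2)$-RDP. Applying post-processing (the server's final model update $\hat\theta_j^{t+1}\gets\hat\theta_j^t+\gamma\Delta\widetilde\theta_j^t$) preserves this, yielding $(\alpha,\varepsilon_1+\varepsilon_2)$-RDP for $\mathcal{M}$ as claimed.

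The main obstacle I anticipate is the sensitivity bookkeeping for the model-parameter aggregate in the presence of rebalancing: one must argue carefully that a single added/removed client perturbs at most two of the $k$ cluster sums (its original cluster plus at most one cluster it is copied into), and that the clip is applied to the raw update before any duplication so the magnitude of each perturbation is still bounded by $C_\theta$ — this is exactly what makes the constant $2C_\theta$ in the Gaussian noise the right choice and reconciles the noise scale with the target $\varepsilon_2$. Everything else is a routine assembly of known RDP facts (Gaussian mechanism RDP, adaptive composition, post-processing invariance).
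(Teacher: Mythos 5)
Your overall route is the same as the paper's: calibrate the Gaussian mechanism on the per-cluster aggregates so that its RDP is $\alpha/(2\sigma_\theta^2)=\varepsilon_2$ at $\sigma_\theta=\sqrt{\alpha/(2\varepsilon_2)}$ (Theorem~\ref{thm:gm_in_rdp}), take the identifier-privatization step as $(\alpha,\varepsilon_1)$-RDP by hypothesis, and stitch the two together with adaptive RDP composition (Theorem~\ref{thm:rdp_composition}), treating the rebalancing and the server-side update $\hat\theta_j^{t+1}=\hat\theta_j^t+\gamma\Delta\widetilde\theta_j^t$ as post-processing. That skeleton matches the paper's proof exactly.

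The one place where you diverge, and where there is a genuine gap, is the sensitivity bookkeeping behind the constant $2C_\theta$. You argue that a single client's clipped update appears in at most two of the $k$ sums and that ``the magnitude of each perturbation is still bounded by $C_\theta$.'' That is not the worst case. In the paper's Case~2 (Appendix~\ref{app:sensitivity}), removing a client from a cluster sitting exactly at the threshold $B$ causes the rebalancing step to draw a \emph{different} client's update $\Delta\theta'$ into that cluster, so that single cluster's sum changes by $\Delta\theta_A-\Delta\theta'$, i.e., by up to $2C_\theta$ in one block (with the donor cluster changing by up to $C_\theta$); the per-cluster sensitivity is genuinely $2C_\theta$, not $C_\theta$. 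Your accounting misses this substitution effect entirely: you only track the presence or absence of the client's own (possibly duplicated) update, and you also assume copy semantics (non-disjoint clusters), whereas the paper analyzes rebalanced clusters as a disjoint partition and bounds each cluster's sum separately. Because your final calibration of $2C_\theta$ happens to coincide with the paper's, the stated conclusion survives, but the justification as written does not cover the case that actually forces the factor of two; to close the gap you need the Case~1/Case~2 analysis of Appendix~\ref{app:sensitivity}, i.e., an argument that adding or removing one client changes any single cluster's sum by at most $2C_\theta$ even when it alters which updates the rebalancing step inserts.
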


The above result directly follows from adaptive composition properties of RDP~\citep{renyi_dp} combined with sensitivity bound of the sum of model updates within each cluster (Appendix~\ref{app:sensitivity}). As discussed in Section~\ref{sec:method}, we can adopt existing randomized algorithms to privatize cluster identifiers, as long as it achieves $(\alpha, \varepsilon_1)$-RDP. 
Using these noisy cluster identifiers, we identify large and small clusters and perform random rebalancing. Given rebalanced clusters (i.e., disjoint partitions of selected clients), we privatize each cluster model with a Gaussian mechanism to obtain $(\alpha, \varepsilon_2)$-RDP for each model. 
Since these $(\alpha, \varepsilon_2)$-RDP models are trained on disjoint partition of clients in parallel, we can achieve overall $(\alpha, \varepsilon_2)$-RDP for the output of $k$ models per round conditioned on subsampled clients.

To derive the final privacy guarantee of \randname, we analyze (1) the client sampling process and (2) the composition of $T$ communication rounds. Note that the randomized mechanism $\mathcal{M}$ in Theorem~\ref{thm:rdp_single_round} takes as input the subset of clients $M^t$. Thus, given the input of total $M$ clients, we can exploit randomness during client selection and derive a stronger privacy guarantee.
Consider  $\mathcal{M}$ as a black-box mechanism, we can apply privacy amplification theorem of RDP under sampling without replacement~\citep{subsample_rdp}.
We then apply composition theorem~\citep{renyi_dp} to account for total $T$ communication rounds and derive the final RDP privacy guarantee as follows.

\begin{restatable}{theorem}{rdpTrounds}
\label{thm:rdp_T_rounds}
    Suppose a randomized mechanism $\mathcal{M}$ that inputs a subset of selected clients $M^t$ and  outputs $k$ new clustered models satisfies $(\alpha, \varepsilon_1(\alpha){+}\varepsilon_2(\alpha))$-RDP for a single communication round of \randname. Given a client sampling ratio $q$ and total $T$ communication rounds, \randname algorithm satisfies $(\alpha, \varepsilon(\alpha))$-RDP where
\label{thm:overall_privacy}
\end{restatable}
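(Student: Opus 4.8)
The plan is to assemble the final $(\alpha, \varepsilon(\alpha))$-RDP bound by chaining three standard RDP facts: the per-round guarantee from Proposition~\ref{thm:rdp_single_round}, the privacy amplification by subsampling without replacement, and linear composition over $T$ rounds. First I would set up notation: let $\mathcal{M}$ denote the per-round mechanism that takes the sampled subset $M^t$ as input, and recall from Proposition~\ref{thm:rdp_single_round} that $\mathcal{M}$ is $(\alpha, \varepsilon_1(\alpha)+\varepsilon_2(\alpha))$-RDP when $\sigma_\theta = \sqrt{\alpha/(2\varepsilon_2)}$ and the identifier privatization is $(\alpha,\varepsilon_1(\alpha))$-RDP. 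The key observation is that $\mathcal{M}$ treats its input as the entire dataset, so the outer randomness of sampling each of the $M$ clients independently with probability $q$ (sampling without replacement / Poisson-style subsampling) amplifies privacy.

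Second, I would invoke the subsampled-RDP amplification theorem of \citet{subsample_rdp} as a black box. That theorem converts an $(\alpha, \varepsilon_0(\alpha))$-RDP base mechanism into a $(\alpha, \varepsilon'(\alpha))$-RDP subsampled mechanism, where $\varepsilon'(\alpha)$ is given by an explicit (if unwieldy) expression: a $\frac{1}{\alpha-1}\log$ of a binomial-type sum in $q$, involving terms like $\binom{\alpha}{j} q^j (1-q)^{\alpha-j}$ weighted by $e^{(j-1)\varepsilon_0(j)}$ moments of the base mechanism at integer orders $j \le \alpha$. I would state this as the amplified single-round bound $\varepsilon_{\text{round}}(\alpha)$ with $\varepsilon_0(\alpha) = \varepsilon_1(\alpha) + \varepsilon_2(\alpha)$. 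Care is needed here that the amplification theorem's hypotheses match our setting — in particular it typically requires bounding the base mechanism's RDP at all integer orders up to $\alpha$, so strictly I would state the result for integer $\alpha \ge 2$ (or note the standard extension), and confirm the sampling scheme in Algorithm~\ref{alg:pseudo-code-general} (Line~\ref{op:sample}, each client included independently w.p. $q$) is exactly the regime the cited theorem covers.

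Third, since the $T$ communication rounds are run adaptively (round $t{+}1$ depends on the outputs of rounds $0,\dots,t$), I would apply the adaptive composition theorem for RDP~\citep{renyi_dp}: the composition of $T$ mechanisms each $(\alpha, \varepsilon_{\text{round}}(\alpha))$-RDP is $(\alpha, T\cdot\varepsilon_{\text{round}}(\alpha))$-RDP. This yields the final statement with
$$
\varepsilon(\alpha) = T \cdot \varepsilon_{\text{round}}(\alpha),
$$
where $\varepsilon_{\text{round}}(\alpha)$ is the amplified per-round bound from the previous step. Optionally I would then remark that one recovers an $(\varepsilon,\delta)$-DP guarantee by the RDP-to-DP conversion (Theorem~\ref{thm:rdp_to_dp}) and optimizing over $\alpha$.

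The main obstacle is not the logical chain — which is routine once the pieces are named — but correctly transcribing and justifying the subsampled-RDP amplification bound: making sure the direction of adjacency (add/remove one client, matching our client-level notion) is the one the cited theorem assumes, that the integer-order moment bounds of the base mechanism are available (our base mechanism is a Gaussian mechanism composed with the identifier mechanism, so its RDP curve is known at all orders), and that the final closed form is stated with the right binomial sum rather than a loose upper bound. A secondary subtlety is confirming that the rebalancing step (Line~\ref{op:alg1_rebalance}) and the identifier-based partition do not leak beyond what Proposition~\ref{thm:rdp_single_round} already accounts for — but since rebalancing is data-independent given the noisy identifiers and the partition is a post-processing of $(\alpha,\varepsilon_1)$-RDP outputs, this is handled by post-processing invariance and needs only a one-line remark.
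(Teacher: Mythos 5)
Your proposal matches the paper's proof essentially step for step: Proposition~\ref{thm:rdp_single_round} gives the per-round $(\alpha,\varepsilon_1+\varepsilon_2)$-RDP bound, the subsampling amplification theorem of \citet{subsample_rdp} is applied as a black box to the per-round mechanism, and adaptive RDP composition over $T$ rounds yields $\varepsilon(\alpha) \le T\cdot\varepsilon_{\text{round}}(\alpha)$, followed by the optional RDP-to-DP conversion. Your cautionary remarks (integer orders, adjacency notion, data-independence of rebalancing) are sensible but do not change the argument, so this is the same proof.
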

\vspace{-15pt}
\begin{small}
    \begin{align*}
    \begin{split}
        \varepsilon(\alpha) \leq \frac{T}{\alpha - 1} \cdot \log \biggl( 1 & + q^2 \binom{\alpha}{2} \min \Bigl\{ 4(e^{\varepsilon_1(2) + \varepsilon_2(2)} - 1), \quad e^{\varepsilon_1(2) + \varepsilon_2(2)} \min \{ 2, (e^{\varepsilon_1(\infty)+\varepsilon_2(\infty)} - 1)^2 \} \Bigr\} \\
        &+ \sum_{j=3}^{\alpha} q^j \binom{\alpha}{j} e^{(j-1)(\varepsilon_1(j)+\varepsilon_2(j)}) \min \{ 2, (e^{\varepsilon_1(\infty)+\varepsilon_2(\infty)} - 1)^j \} \biggr).\\
    \end{split}
    \end{align*}
\end{small}
\vspace{-8pt}

Here, $\varepsilon$ is denoted as $\varepsilon(\alpha)$, as $\varepsilon$ is functionally determined by the choice of $\alpha$. We note that the above complex bound directly follows from \citet{subsample_rdp}, which provides a precise non-asymptotic bound rather than relying on an asymptotic bound. In Appendix~\ref{app:priv_guarantee_rdp}, we provide a detailed proof of Theorem~\ref{thm:overall_privacy} and the privacy guarantee in terms of DP using RDP-DP conversion~\citep{renyi_dp}.

\subsection{Convergence of \randname}
In this section, we provide convergence guarantees of our proposed Algorithm~\ref{alg:pseudo-code} (\randname plugged into IFCA~\cite{ifca}) and theoretically explore the tradeoffs between increased clustering bias and reduced privacy noise as a result of random rebalancing with a parameter $B$. 
For simplicity, we assume that all clients participate in every round. We first state some assumptions.

\begin{assumption}
\label{ass:convex_smooth}
    For  every  $j \in [k]$, $F^j(\cdot)$ is $\lambda$-strongly convex and $L$-smooth.
\end{assumption}

\begin{assumption}
    For every $\theta$ and every $j  \in [k]$, the variance of $f(\theta ; z)$ is upper bounded by $\eta^2$, 
     when $z$ is sampled from $\mathcal{D}_j$, i.e.,  $\mathbb{E}_{z \sim \mathcal{D}_j}\left[\left(f(\theta, z)-F^j(\theta)\right)^2\right] \leq \eta^2$.
\end{assumption}

\begin{assumption} 
\label{assum:bounded_grad_variance}
    For every $\theta$ and every $j\in[k]$, the variance of $\nabla f(\theta;z)$ is upper bounded by $v^2$, where $z$ is sampled from $\mathcal{D}_j$, i.e., $\mathbb{E}_{z\sim\mathcal{D}_j} \left[\left\|\nabla f(\theta, z)-\nabla F^j(\theta)\right\|^2\right] \leq v^2$.
\end{assumption}

\begin{assumption}
    For every cluster $S_j~(j\in[k])$ in any round, the variance of cluster sizes $|S_j|$ is upper bounded by $\mu^2$, i.e., $\mathbb{E}[(|S_j|-\frac{1}{k}\sum_{j\in [k]}{|S_j|})^2] \leq \mu^2$.
\end{assumption}

\begin{assumption}
\label{ass:bounded_stochastic_gradient}
    For every $\theta$ and every $j \in [k]$, the norm of the stochastic gradient $\nabla f(\theta; z)$ is bounded, i.e., there exists a constant $G > 0$ such that $ \|\nabla f(\theta; z)\| \leq G, \forall z \sim \mathcal{D}_j. $
    \vspace{-5pt}
\end{assumption}

Here, we further assume that the adopted DP clipping threshold $C_\theta$ is larger than any stocastic gradient norm, i.e., $C_\theta > G$, the clipping does not affect model update in our analysis. This assumption is commonly used across various differential privacy analysis \citep{adadps, dpfedsam, fedavg_convergence}.

\begin{lemma}
\label{lemma:mis_cls}
Let client $i$ belong to $S^t_j$ by ground-truth at the $t$-th communication round ($t \in [T]$). Assume that we obtain the cluster model $\theta_j^t$ such that there exist $0<\beta<\frac{1}{2}$ and $\Delta :=  \min_{j \neq j'}\|\theta_j^*-\theta_{j'}^*\|$ that satisfy 
$\|\theta^t_j-\theta_j^*\| < (\frac{1}{2}-\beta) \sqrt{\frac{\lambda}{L}}{\Delta}$. 
Suppose we add zero-mean Gaussian noise with variance $\sigma_s^2$ to cluster identifiers with clipping bound 1.
Then at the $t$-th round, the probability of misclassifying client $i$'s update into any other cluster $j' \neq j$ is upper bounded by  
\begin{equation}
\begin{aligned}
\label{eq:tau}
\tau = \frac{8\eta^2k}{\beta^2 \lambda^2 \Delta^4} + \frac{\sigma_sk}{\sqrt{\pi}}\exp(-1/4\sigma_s^2) + \frac{k^2\mu^2}{(M/k-B)^2}.
\end{aligned}
\end{equation}
\end{lemma}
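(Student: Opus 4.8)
The plan is to decompose the misclassification event into the three sources of error that appear as the three summands of $\tau$, bound each separately, and combine via a union bound. Write $\hat{j}(i)$ for client $i$'s self-reported cluster (before privatization of the identifier) and $\tilde{j}(i)$ for the cluster index recovered by the server from the noisy one-hot $\widetilde{s}_i^t$. Then $\{i \text{ is misclassified into } j' \neq j\} \subseteq \{\hat{j}(i) \neq j\} \cup \{\tilde{j}(i) \neq \hat{j}(i)\} \cup \{i \text{ is moved by rebalancing}\}$, so $\tau \le \Pr[\hat{j}(i)\neq j] + \Pr[\tilde{j}(i)\neq\hat{j}(i)] + \Pr[\text{rebalanced}]$, and it suffices to bound these by the three terms $\tfrac{8\eta^2 k}{\beta^2\lambda^2\Delta^4}$, $\tfrac{\sigma_s k}{\sqrt{\pi}}\exp(-1/4\sigma_s^2)$, and $\tfrac{k^2\mu^2}{(M/k - B)^2}$ respectively.

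\textbf{Step 1 (loss-based assignment error).} This is the IFCA-style argument from \citet{ifca}. If $\hat{j}(i) = j' \neq j$ then $F(\theta^t_{j'}, D_i) \le F(\theta^t_j, D_i)$, i.e. the empirical loss of the wrong model is at most that of the right one on $D_i$. Using $\lambda$-strong convexity and $L$-smoothness of $F^j$ together with the closeness hypothesis $\|\theta^t_j - \theta^*_j\| < (\tfrac12 - \beta)\sqrt{\lambda/L}\,\Delta$ and the separation $\Delta = \min_{j\neq j'}\|\theta^*_j - \theta^*_{j'}\|$, one shows the population losses satisfy $F^j(\theta^t_j) + (\text{gap}) \le F^j(\theta^t_{j'})$ for a positive gap of order $\beta\lambda\Delta^2$; hence the event $\hat j(i)\neq j$ forces the empirical loss on $D_i$ to deviate from its mean by at least roughly $\tfrac{\beta\lambda\Delta^2}{2}$ (splitting the gap between the two models). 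Applying Chebyshev's inequality with the per-sample loss variance bound $\eta^2$ (Assumption 2), and union-bounding over the $k-1$ alternative clusters, yields $\Pr[\hat j(i)\neq j] \le \tfrac{8\eta^2 k}{\beta^2\lambda^2\Delta^4}$ after tracking constants.

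\textbf{Step 2 (identifier-noise error).} Conditioned on $\hat j(i) = j$, the one-hot $s_i^t = e_j$ has norm $1$ (so clipping is inactive) and the server sees $\widetilde{s}_i^t = e_j + N$ with $N \sim \mathcal{N}(0, C_s^2\sigma_s^2 I_k)$; with clipping bound $1$ take $C_s = 1$. The recovered label differs from $j$ only if some coordinate $\ell \neq j$ of the noise exceeds the $j$-th coordinate, i.e. $N_\ell - N_j > 1$ for some $\ell$. A union bound over the $k-1$ off-coordinates plus a Gaussian tail bound $\Pr[\mathcal{N}(0,2\sigma_s^2) > 1] \le \tfrac{\sigma_s}{\sqrt{\pi}}\exp(-1/(4\sigma_s^2))$ (the standard $\Pr[Z>t]\le \tfrac{1}{t\sqrt{2\pi}}e^{-t^2/2}$ estimate applied with variance $2\sigma_s^2$) gives $\Pr[\tilde j(i)\neq \hat j(i)] \le \tfrac{\sigma_s k}{\sqrt\pi}\exp(-1/(4\sigma_s^2))$.

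\textbf{Step 3 (rebalancing error) and combination.} A client correctly placed in cluster $j$ is relocated only if $j$ is a large cluster ($|S_j^t|\ge B$) that gets sampled from; the probability that a uniformly random member of a large cluster is among the at most $B$ (per small cluster) reassigned indices is controlled by how much cluster sizes exceed $B$. The deterministic bookkeeping is that the total number of moved updates is $\sum_{j\in j_s}(B - |S_j^t|) \le \sum_j |\,|S_j^t| - M/k\,|$, and each large cluster $j$ has size at least $M/k - (\text{deviation})$, so the chance an individual member is moved is at most $(\text{total moved})/(|S_j^t|) $; squaring deviations and invoking Assumption 4 ($\mathbb{E}[(|S_j| - \tfrac1k\sum|S_j|)^2]\le\mu^2$) together with $|S_j^t|\ge M/k - B$ on the relevant event converts this into the bound $\tfrac{k^2\mu^2}{(M/k-B)^2}$ via Markov/Chebyshev on the size deviations. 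Summing the three contributions gives $\tau$ as stated.

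\textbf{Main obstacle.} The delicate part is Step 3: making the informal ``fraction of a large cluster that gets reassigned'' argument rigorous requires carefully coupling the random-rebalancing sampling with the random cluster sizes and arguing that the per-client relocation probability is genuinely $O(k^2\mu^2/(M/k-B)^2)$ rather than merely $O(k\mu/(M/k-B))$ — i.e. getting the square. I expect this follows by writing the relocation indicator's expectation as an expectation over cluster-size configurations of a ratio (number displaced)/(cluster size), bounding the numerator by $\sum_j(|S_j^t| - M/k)_+$ and the denominator below by $M/k - B$ on the event that cluster $j$ is large, and then applying Cauchy--Schwarz or Chebyshev to the size deviations to produce $\mu^2$; one must also confirm the hypothesis $B \le M/k$ keeps the denominator positive. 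Steps 1 and 2 are essentially standard (the former is the IFCA cluster-identification lemma, the latter a Gaussian tail estimate), so the bulk of the new work is bookkeeping the rebalancing term and verifying all constants line up with \eqref{eq:tau}.
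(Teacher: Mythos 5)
Your decomposition into the three error sources (loss-based assignment, identifier noise, rebalancing) and the union bound over them is exactly the paper's structure, and your Steps 1 and 2 reproduce the paper's Factors 1 and 2: the same strong-convexity/smoothness argument giving a population loss gap of order $\beta\lambda\Delta^2$, Chebyshev with the variance bound $\eta^2$, and the Gaussian tail estimate $\Pr(\mathcal{N}(0,2\sigma_s^2)\le -1)\le \frac{\sigma_s}{\sqrt{\pi}}e^{-1/(4\sigma_s^2)}$ for the noisy one-hot identifier; the paper simply proves the pairwise bounds and multiplies the whole sum by $k$ at the end, which is equivalent to your per-step union bounds.

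The genuine gap is Step 3, and it is exactly the point you flag as the ``main obstacle'': your proposed bookkeeping cannot produce the quadratic dependence $\mu^2/(M/k-B)^2$. Bounding the per-client relocation probability by $\mathbb{E}\big[(\text{total moved})/(\text{cluster size})\big]$, with the total moved controlled by $\sum_j\big||S_j^t|-M/k\big|$ and the denominator bounded below by a constant, gives at best a first-moment bound: Cauchy--Schwarz yields $\mathbb{E}\big||S_j^t|-M/k\big|\le\mu$, so this route ends at something of order $k\mu/(M/k-B)$, and no application of Cauchy--Schwarz or Chebyshev to that ratio will ``square'' it. (Your auxiliary claim that a large cluster has $|S_j^t|\ge M/k-B$ is also unjustified; large clusters are only guaranteed $|S_j^t|\ge B$.) The paper obtains the square from a different mechanism: it factors the relocation event as (client's cluster is in the donating set) $\times$ (client is among the $h$ sampled updates), bounds the second, conditional factor crudely and deterministically by $h/\sum_{j\in j_l}|S_j^t|\le M/(M/k)=k$, and puts all of the $\mu$-dependence into the \emph{probability of the size-deviation event itself}, bounded via one-sided Chebyshev on $|S_j|$ (with mean $M/k$, variance $\mu^2$, deviation $M/k-B$) as $\big(\mu/(M/k-B)\big)^2$; the final factor of $k$ then comes from the union over target clusters $j'$, giving $k^2\mu^2/(M/k-B)^2$. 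So the missing idea is that $\mu^2$ must enter through a Chebyshev bound on a deviation probability, not through the expected displaced fraction; without that, your Step 3 does not reach Eq.~\eqref{eq:tau}.
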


\vspace{-5pt}
We note that in practice, the total number of clients is much larger than the number of clusters, i.e., $M/k \gg 1$. Hence $\tau$ can be much smaller than 1. In addition, $\tau$ decreases with the increase of the model separation parameter $\Delta$, which is expected.
We prove Lemma~\ref{lemma:mis_cls} in Appendix~\ref{app:err_cluster_prob}. 

\begin{assumption}
\label{assum:seperation}
    We assume that there exists $\alpha_0 ~{\in}~ (0, \frac{1}{2})$ s.t. the initial parameters $\hat\theta_j^0$ satisfy $\|\hat\theta_j^0 - \hat\theta_j^*\| {\leq} (\frac{1}{2}-\alpha_0)\sqrt{\frac{\lambda}{L}}\Delta$ for every $j \in [k]$, where 
    $
        \Delta :=  \min_{j \neq j'}\|\theta_j^*-\theta_{j'}^*\| \geq \mathcal{O}\left(
        \alpha_0^{\frac{1}{2}}k^{\frac{3}{4}} M^{-1} B^{-\frac{1}{2}} \right),
    $
    $M > \left(\frac{B\Delta}{2\tau} {-} \frac{\rho}{\tau \gamma L}\right)$, and $\rho := M{-}(k{-}1)B$.
\end{assumption}
Assumption~\ref{assum:seperation} requires that the initialization models are good in the sense that they are close to the optimal cluster models. The condition on $\Delta$ indicates well-separated underlying cluster models. We note that the assumption on $M$ can be satisfied when the client population is large.

\begin{theorem}
\label{thm:one_step}
Let Assumptions~\ref{ass:convex_smooth}-\ref{assum:seperation} hold.  Assume in a certain iteration $t$ of Algorithm~\ref{alg:pseudo-code}, we obtain $\theta_j$ such that $\|\theta_j-\theta_j^*\| < (\frac{1}{2}-\beta) \sqrt{\frac{\lambda}{L}}{\Delta}$ for some $\beta \in (0, \frac{1}{2})$. 
Denote $\theta_j^+$ to be the parameter in next iteration. Then, for any $j \in [k]$, with probability at least $1-\delta_c$, we have 
\vspace{-1pt}
\begin{small}
\begin{equation}
\begin{aligned}
\label{eq:one_step}
\|\theta^+_j & -\theta^*_j\|\leq \left(1-\frac{\gamma L (B-{2\tau M}/{\delta_c})}{2\rho}\right)\|\theta_j - \theta^*_j\| + \epsilon, \\
\epsilon & = \frac{4 v}{\sqrt{\delta_c(B\delta_c-4\tau M)}}
+\frac{6 \tau \gamma L\Delta M}{\delta_c B}
+\frac{8\gamma v k\sqrt{\tau k M}}{B\delta_c\sqrt{\delta_c}} +\frac{2\gamma\sigma_\theta C_\theta}{B}\sqrt{d + 2\sqrt{d\ln(\tfrac{4}{\delta_c})} + 2\ln(\tfrac{4}{\delta_c})}.
\end{aligned}
\end{equation}
\end{small}
\end{theorem}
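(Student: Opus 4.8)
The plan is to fix a communication round $t$ and a cluster index $j$, and to track how the single server update $\theta_j^+ = \theta_j + \gamma\,\widetilde\Delta_j$ displaces $\theta_j$ relative to $\theta_j^*$. First I would set up notation for the \emph{rebalanced} cluster: let $\widetilde S_j$ be the set of client indices whose updates are averaged into cluster $j$ after the rebalancing step of Algorithm~\ref{alg:pseudo-code}, put $n_j := |\widetilde S_j|$, and split $\widetilde S_j = A_j \sqcup A_j^c$, where $A_j$ collects the clients whose \emph{ground-truth} cluster is $j$ and $A_j^c$ the remaining, wrongly placed, ones. Two deterministic facts follow immediately from the rebalancing rule: $n_j \ge B$ always, and (under full participation) $\sum_{j'} n_{j'} = M$, so $n_j \le M-(k-1)B = \rho$. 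Since $C_\theta > G$ the clipping step is inactive, hence $\widetilde\Delta_j = \tfrac{1}{n_j}\big(\sum_{i\in\widetilde S_j}\Delta\theta_i + \xi_j\big)$ with $\xi_j\sim\mathcal N\!\big(0,(2C_\theta\sigma_\theta)^2 I_d\big)$; I treat each in-cluster client's local update as an unbiased estimate of (a fixed multiple of) the cluster descent direction $-\nabla F^j(\theta_j)$ with per-client variance $\le v^2$ (Assumption~\ref{assum:bounded_grad_variance}), and each out-of-cluster update as norm-bounded through Assumption~\ref{ass:bounded_stochastic_gradient}.

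The next step is to control $|A_j|,|A_j^c|$ and decompose the update. By Lemma~\ref{lemma:mis_cls}, a fixed client is placed in the wrong cluster with probability at most $\tau$; since the extra updates pulled in by rebalancing are drawn \emph{uniformly at random} from the large clusters, the expected number of wrongly-placed updates inside $\widetilde S_j$ is $O(\tau M)$, and a Markov bound gives $|A_j^c|\le 2\tau M/\delta_c$, hence $|A_j| = n_j-|A_j^c|\ge B-2\tau M/\delta_c$, on an event of probability $\ge 1-\delta_c/4$; with $n_j\le\rho$ this yields $|A_j|/n_j \ge (B-2\tau M/\delta_c)/\rho > 0$, positivity coming from the lower bound on $M$ in Assumption~\ref{assum:seperation}. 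I then write
\begin{align*}
\theta_j^+ - \theta_j^* &= \Big(\theta_j - \theta_j^* - \tfrac{\gamma|A_j|}{n_j}\nabla F^j(\theta_j)\Big) - \tfrac{\gamma}{n_j}\sum_{i\in A_j}\big(\Delta\theta_i + \nabla F^j(\theta_j)\big) \\
&\quad - \tfrac{\gamma}{n_j}\sum_{i\in A_j^c}\Delta\theta_i - \tfrac{\gamma}{n_j}\xi_j ,
\end{align*}
and bound the four terms in turn. For the first, the hypothesis $\|\theta_j-\theta_j^*\| < (\tfrac12-\beta)\sqrt{\lambda/L}\,\Delta$ keeps $\theta_j$ inside the strongly-convex basin of $F^j$, so the standard one-step gradient-descent contraction for $\lambda$-strongly-convex $L$-smooth objectives (with effective step $\gamma|A_j|/n_j \le 1/L$), combined with $|A_j|/n_j \ge (B-2\tau M/\delta_c)/\rho$, gives the contraction factor displayed in \eqref{eq:one_step}. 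For the second, the summands are conditionally independent and mean-zero with variance $\le v^2$, so $\mathbb E\|\cdot\|^2 \le |A_j|v^2$ and Markov's inequality yields the $v/\sqrt{\delta_c(B\delta_c-4\tau M)}$ term. For the third, there are at most $2\tau M/\delta_c$ out-of-cluster updates; splitting each into its true cluster's descent component — bounded via $L$-smoothness and the inter-cluster separation scale $\Delta$, producing the $\tau\gamma L\Delta M/(\delta_c B)$ term — plus a zero-mean stochastic component whose aggregate variance, after a second Markov bound on the random number of reassigned clients per true cluster (which concentrates thanks to the cluster-size-variance bound $\mu^2$), produces the $\gamma v k\sqrt{\tau k M}/(B\delta_c\sqrt{\delta_c})$ term. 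For the fourth, the Laurent--Massart $\chi^2$ tail gives $\|\xi_j\|^2 \le (2C_\theta\sigma_\theta)^2\big(d + 2\sqrt{d\ln(4/\delta_c)}+2\ln(4/\delta_c)\big)$ with probability $\ge 1-\delta_c/4$, and dividing by $n_j\ge B$ gives the last term of $\epsilon$. A union bound over the (at most four) failure events, each of probability $\le\delta_c/4$, completes the argument.

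The main obstacle I anticipate is the misassignment term. Two points require care there: (i) showing that data-independent uniform rebalancing does not concentrate too many \emph{same-true-cluster} clients into a single wrong cluster, so that both $|A_j^c|$ and the variance of the stochastic parts of the reassigned updates stay small — this is what forces the gentler $\sqrt{\tau k M}$ dependence (rather than $\tau M$) and needs a second-moment bound over an index set of random size together with the $\mu^2$ cluster-size-variance assumption used inside $\tau$ in Lemma~\ref{lemma:mis_cls}; and (ii) bounding the descent component of a well-trained cluster-$j$ model evaluated on cluster-$j'$ data by something of order $L\Delta$, which is where $\Delta$ enters the bias term. The remaining ingredients — the one-step convex contraction, the two Markov bounds, and the Gaussian tail — are routine, but one must keep the bookkeeping of local-versus-server step sizes (folded into $\gamma$) and the allocation of $\delta_c$ across the failure events consistent so that the constants match \eqref{eq:one_step} exactly.
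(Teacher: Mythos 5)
Your proposal follows essentially the same route as the paper's proof: the same decomposition of the averaged update into a contraction term over correctly-assigned clients, a stochastic-deviation term (Markov), a misassignment bias/variance term controlled via Lemma~\ref{lemma:mis_cls} plus Markov and the $O(L\Delta)$ cross-cluster gradient bound, and a $\chi^2$ tail for the DP noise, with $|S_j|\in[B,\rho]$ and a $\delta_c/4$ allocation across the four failure events. The only cosmetic difference is how you phrase the variance control for the reassigned updates (the paper splits $S_j\cap\overline{S_j^*}$ over the $k$ true clusters and applies Markov with budget $\delta_c/(4k)$ each, with $\mu^2$ entering only through $\tau$), which does not change the argument.
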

We prove Theorem~\ref{thm:one_step} in Appendix~\ref{app:conv}.
We note that this bound gets worse as the privacy noise $\sigma_{\theta}$ and $\sigma_s$ (embedded in $\tau$ in Eq.~\eqref{eq:tau}) increase. Additionally, note $\tau \propto \mathcal{O}(\Delta^{-4}M^{-2})$~(Eq.~\eqref{eq:tau}), then the error term $\epsilon$ with $\tau$ in Eq.~\eqref{eq:one_step} can be small if we have enough clients (large $M$) and a good cluster separation initialization (large $\Delta$). 
Further, we provide the final convergence results in  Corollary~\ref{conv_cor} in the appendix based on Theorem~\ref{thm:one_step}. We discuss the effects of $B$ in the next paragraph.

\label{sec:tradeoff}
\paragraph{Discussions on the Bias/Variance Tradeoff.} 
Our method aims to achieve better tradeoffs between clustering bias and privacy noise variance by random rebalancing parameterized by $B$. For clustering bias, for instance,  Lemma~\ref{lemma:mis_cls} indicates that the probability of incorrect cluster assignments $\tau$ is proportional to $\frac{c'}{(M/k-B)^2}$ for some constant $c'$. Since $M>kB$, we have that $\tau$ increases as the increase of $B$.  
This is also aligned with the intuition that a larger $B$ means sampling more client updates into potentially wrong ones. 
Considering variance, 
the error stemmed from privacy noise  (the last term in $\epsilon$) decreases as $B$ increases. Therefore, we observe that as $B$ increases, we get potentially biased cluster assignments, but significantly much smaller privacy noise perturbation, thus potentially achieving a better tradeoff between bias and variance.  We closely analyze this behavior on a synthetic dataset in Appendix~\ref{app:side_effect}.
In Section \ref{sec:exps}, we empirically show that \randname improves over the baselines under various choices of $B$ (e.g., Table~\ref{table:acc_cluster}).

\section{Experiments} \label{sec:exps}

In this section, we evaluate \randname across diverse datasets and scenarios to demonstrate its superior performance. We present main results in Section~\ref{sec:exp_sota} on both image and text tasks, showing that our approach outperforms state-of-the-art DP clustering baselines in terms of privacy/utility tradeoffs and clustering quality. 
We also demonstrate the compatibility of \randname as a general plug-in to various federated clustering algorithms. We  provide diagnostic analysis in Section~\ref{sec:diagnostic}. 

\subsection{Experimental Settings}
\label{sec:exp_setting}
\paragraph{Datasets.} We evaluate our method on four image datasets, one text dataset, and one synthetic dataset as follows. We use FashionMNIST \citep{fashionmnist} that consists of 10-class 70,000 grayscale images of clothes, and EMNIST \citep{emnist}, an extended dataset of the original MNIST, containing 814,255 images of handwritten characters with 47 categories covering both digits and letters. 
For the text data, we use the Shakespeare dataset \citep{shakes1, shakes2} on a next-character prediction task over the texts of Shakespeare's plays and writings. To control data separability and illustrate \randname's behavior in non-private settings in more detail, we generate synthetic datasets for a linear regression task. Further, we also conduct experiments on both CIFAR10 and CIFAR100~\citep{cifar} datasets; see Appendix~\ref{app:cifar} for details.

To simulate different data distributions in real-world scenarios, we explore both manual partitions with a clear clustering structure as in previous works and partitions without clear cluster priors from previous FL benchmarks. For FashionMNIST, we followed the setting in IFCA \citep{ifca}, using image rotations to simulate the underlying ground truth clusters. We partition EMNIST based on a Dirichlet distribution over the labels \citep{TFF}, and partition the Shakespeare dataset based on speaking characters in the play~\citep{leaf}. In both image tasks, we have 1000 clients in total; for the Shakespeare dataset, we use 300 clients due to limitations of speaking roles. For all experiments, we use client sample rate $q=0.1$, and tune $k\in\{2,4\}$ based on validation data.

\paragraph{Baselines.} We compare our methods with several competitive FL and federated clustering methods under DP: FedAvg \citep{fedavg}, FedPer \citep{fedper}, IFCA \citep{ifca}, FeSEM \citep{fesem} and FedCAM \citep{fedcam}. 
FedAvg is a classical FL baseline learning a global Model. 
IFCA, FeSEM and FedCAM are strong federated clustering approaches. While we focus on clustering (outputting $k$ models for $M$ clients where $k \ll M$) as opposed to broad federated personalization, we still compare with FedPer, a  personalization method as a reference point. We directly privatize these methods using standard subsampled Gaussian mechanisms guarantee the same client-level global DP across all approaches.
For clustering methods, we privatize cluster identifiers similarly as in \randname. We do not compare with other personalization works that output $M$ models for $M$ clients.

\subsection{Comparison with Strong Baselines} 
\label{sec:exp_sota}

We present experiments comparing our method with other state-of-the-art methods on FashionMNIST, EMNIST, and Shakespeare datasets. For a fair comparison, we set the same hyperparameter (e.g., number of clients, local learning rates, local iterations, and communication rounds) for all methods. 
For DP settings, we tune the clipping bound to get optimal value for each task and each method separately. See Appendix~\ref{app:exp_details} for additional details on experimental setup.


\paragraph{Plugging \randname to Base Clustering Methods.} \label{sec:fashionmnist_text}
In Table~\ref{table:acc_fashionmnist}, we use FashionMNIST rotation to create underlying clusters of the clients, following the setup in previous works \citep{ifca, FCGP} . For ``Balanced Clusters", we first randomly divide the whole dataset into $1000$ clients, and then we group the clients evenly into four clusters as the ground truth. We rotate images in each cluster by {0, 90, 180, 270} degrees, respectively. For ``Imbalanced Clusters", similarly, we rotate clients' images (partitioned into three clusters at a ratio of 2:1:1)  by {0, 90, 180} degrees. We plug  \randname into existing federated clustering methods (IFCA \citep{ifca}, FeSEM \citep{fesem} and FedCAM)  to demonstrate its compatibility. In both settings with balanced or imbalanced clusters, we see that \randname results in more  significantly accurate final models compared with directly privatizing the base clustering algorithms without random rebalancing. We highlight improvement numbers for each setting in green.  The convergence curves under $\varepsilon=2$ and 4 are reported in Figure~\ref{fig:conv_curve}. 

\begin{table*}[h]
\vspace{5pt}
\setlength{\abovecaptionskip}{1em}
\centering
\small
    \resizebox{0.98\linewidth}{!}{
    \renewcommand\arraystretch{1}
        \begin{tabular}{lccc|ccc}
        \thickhline
		& \multicolumn{3}{c}{Balanced Clusters}&\multicolumn{3}{c}{Imbalanced Clusters} \\
      	\multirow{-2}{*}{Methods~~~} & \textcolor{mygreen}{$\varepsilon=2$}  & \textcolor{mygreen}{$\varepsilon=4$} & \textcolor{mygreen}{$\varepsilon=8$}  & \textcolor{mygreen}{$\varepsilon=2$}  & \textcolor{mygreen}{$\varepsilon=4$} & \textcolor{mygreen}{$\varepsilon=8$}   \\
    \hline
    DP-FedAvg & 60.88 & 61.50 & 62.72 & 60.80 & 61.53 & 61.59 \\
    
    \hline
    DP-IFCA & 62.68 & 64.46 & 65.35 & 56.12 & 58.05 & 59.63 \\
    DP-FeSEM & 63.16 & 64.68 & 64.76 & 58.55 & 59.82 & 60.10 \\
    DP-FedCAM & 49.70 & 56.86 & 61.54 & 43.98 & 47.51 & 50.35 \\

    \hline
   \randname (IFCA) & \reshl{65.69}{3.01}&\reshl{66.63}{2.17}&\reshl{67.51}{2.16}& \reshl{61.31}{5.19}&\reshl{61.99}{3.94}&\reshl{63.77}{4.14}\\
    \randname (FeSEM) &\reshl{63.89}{0.73}&\reshl{64.80}{0.12}&\reshl{64.86}{0.10}& \reshl{58.70}{0.15}&\reshl{60.34}{0.52}&\reshl{61.16}{1.06}\\
    \randname (FedCAM) &\reshl{53.91}{4.21}&\reshl{58.65}{1.79}&\reshl{64.78}{3.24}& \reshl{46.12}{2.14}&\reshl{47.95}{0.44}&\reshl{50.78}{0.43}\\
    \thickhline
    \end{tabular}}
    \caption{\small Comparison with baselines on {FashionMNIST}. We validate the effectiveness of our method under various DP budget $\varepsilon$'s. \randname plugged into different clustering methods (the last panel) outperforms the corresponding baselines  that directly privatize the base clustering algorithms without considering uncontrolled cluster sizes.  Full results including comparisons with DP-FedPer are reported in Appendix~\ref{app:tab_with_per}.}
    \label{table:acc_fashionmnist}
\end{table*}


\begin{figure}[h]
    \centering
    \begin{minipage}{0.49\textwidth}
        \centering
        \includegraphics[width=\linewidth]{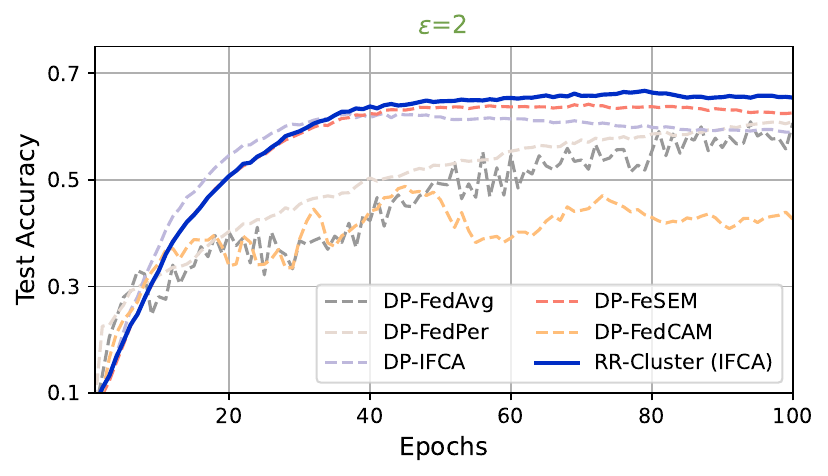}
    \end{minipage}
    \hfill
    \begin{minipage}{0.49\textwidth}
        \centering
        \includegraphics[width=\linewidth]{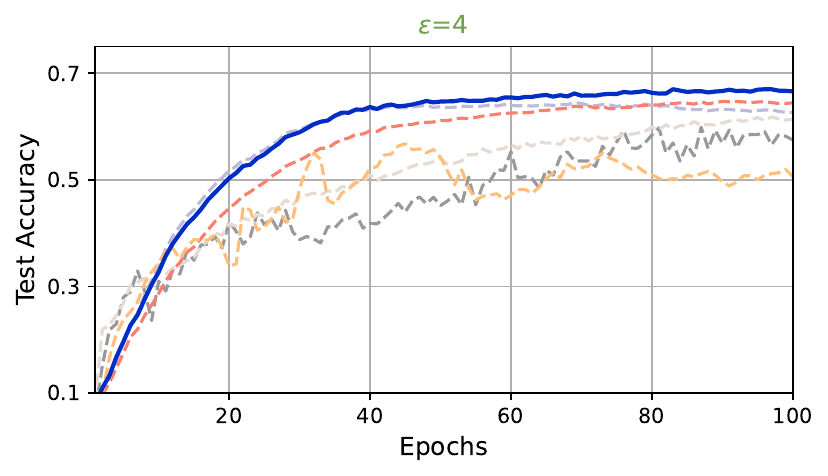}
    \end{minipage}
    \caption{\small Convergence curves compared with strong baselines on {FashionMNIST}. We see that \randname achieves the better  accuracies and faster convergence under both $\varepsilon$ values. 
    }
    \label{fig:conv_curve}
\end{figure}

\paragraph{Text Data with Ambiguous Cluster Patterns.} \label{sec:shakespeare_dataset}
In Table~\ref{table:acc_shakespeare}, we study the performance on the Shakespeare dataset \citep{leaf}, where each speaking role is a client. We do not control or manipulate the underlying clustering patterns a priori. 

\begin{wraptable}{r}{0.48\textwidth}
    \resizebox{0.47\textwidth}{!}{
		\renewcommand\arraystretch{1}
        \begin{tabular}{l|cccc}
		\thickhline
   {Methods~~~} & \textcolor{darkgreen}{$\varepsilon=4$}  & \textcolor{darkgreen}{$\varepsilon=8$} & \textcolor{darkgreen}{$\varepsilon=16$}  \\
    \thickhline
    DP-FedAvg & 04.47 & 13.43 & 17.53 \\
    \hline
    DP-IFCA & 12.62 & 12.64 & 13.20 \\
    DP-FeSEM & 10.97 & 12.99 & 15.89 \\
    DP-FedCAM & - & - & 04.47 \\
    \hline
    \texttt{RR-Cluster (IFCA)} &{13.42}&{13.83}&{16.25}\\
    \hline
    \end{tabular}}
    \vspace{0.05in}
\caption{\small Comparison with baselines on {Shakespeare}. Our proposed approach outperforms the baselines in terms of test accuracies.}
\vspace{1em}
\label{table:acc_shakespeare}
\end{wraptable}
Our method outperforms DP-FedAvg, DP-IFCA, and other clustering baselines that directly integrate DP on this task as well. Note that `$-$' indicates DP-FedCAM  method cannot converge in our experiments. This is due to that it requires clients to send both global model updates and cluster model updates, thus resulting in larger privacy noise. 
Meanwhile, the DP noise can be  significantly reduced by incorporating our \randname into  FedCAM  (Table~\ref{table:acc_fashionmnist}).

\paragraph{Varying Heterogeneity.} \label{sec:emnist_text}
Here, we create datasets with different degrees of heterogeneity. We partition the EMNIST data following a Dirichlet distribution with parameters $\alpha=\{0.5, 0.1\}$ for ``Mild / High Heterogeneity" scenarios respectively. Results are reported in Table~\ref{table:acc_emnist}. We see that (1) most of the clustering methods outperform methods of training a global model in  more heterogeneous setting, and (2) under different privacy $\varepsilon$ values, our method achieves highest accuracies in both heterogeneity settings.

\begin{table*}[h]
\vspace{5pt}
\setlength{\abovecaptionskip}{0.2cm}
\centering
\small
    \resizebox{0.98\linewidth}{!}{
		\renewcommand\arraystretch{1}
        \begin{tabular}{l|ccc|ccc}
		\thickhline
		& \multicolumn{3}{c|}{Mild Client Heterogeneity}&\multicolumn{3}{c}{High Client Heterogeneity} \\
        \multirow{-2}{*}{Methods~~~} & \textcolor{darkgreen}{$\varepsilon=2$}  & \textcolor{darkgreen}{$\varepsilon=4$}  & \textcolor{darkgreen}{$\varepsilon=8$}  & \textcolor{darkgreen}{$\varepsilon=2$}  & \textcolor{darkgreen}{$\varepsilon=4$}  & \textcolor{darkgreen}{$\varepsilon=8$}      \\
    \thickhline
    DP-FedAvg & 26.47 & 30.95 & 32.61 & 23.82 & 24.46 & 25.54 \\
    \hline
    DP-IFCA & 62.23 & 65.39 & 65.93 & 32.53 & 35.60 & 36.19 \\
    DP-FeSEM & 60.43 & 65.72 & 66.24 & 31.12 & 37.52 & 38.74 \\
    DP-FedCAM & 57.35 & 57.22 & 61.26 & 23.66 & 24.55 & 29.70 \\
    \hline
    \randname (IFCA) & \reshl{66.38}{4.15} & \reshl{66.75}{1.03} & \reshl{67.04}{0.80} & \reshl{37.13}{4.60} & \reshl{38.92}{1.40} & \reshl{39.05}{0.31} \\
    \hline
    \end{tabular}}
    \caption{\small Comparison with baselines on the {EMNIST} dataset. We see that \randname outperforms other approaches especially when the privacy parameter is small (i.e., stronger privacy guarantees).}
    \label{table:acc_emnist}
\end{table*}

\subsection{Diagnostic Analysis} 
\label{sec:diagnostic}\label{sec:exp_tradeoff}

\begin{wrapfigure}{r}{0.49\textwidth}
\centering
    \setlength{\abovecaptionskip}{0cm}
    \includegraphics[width=0.48\textwidth]{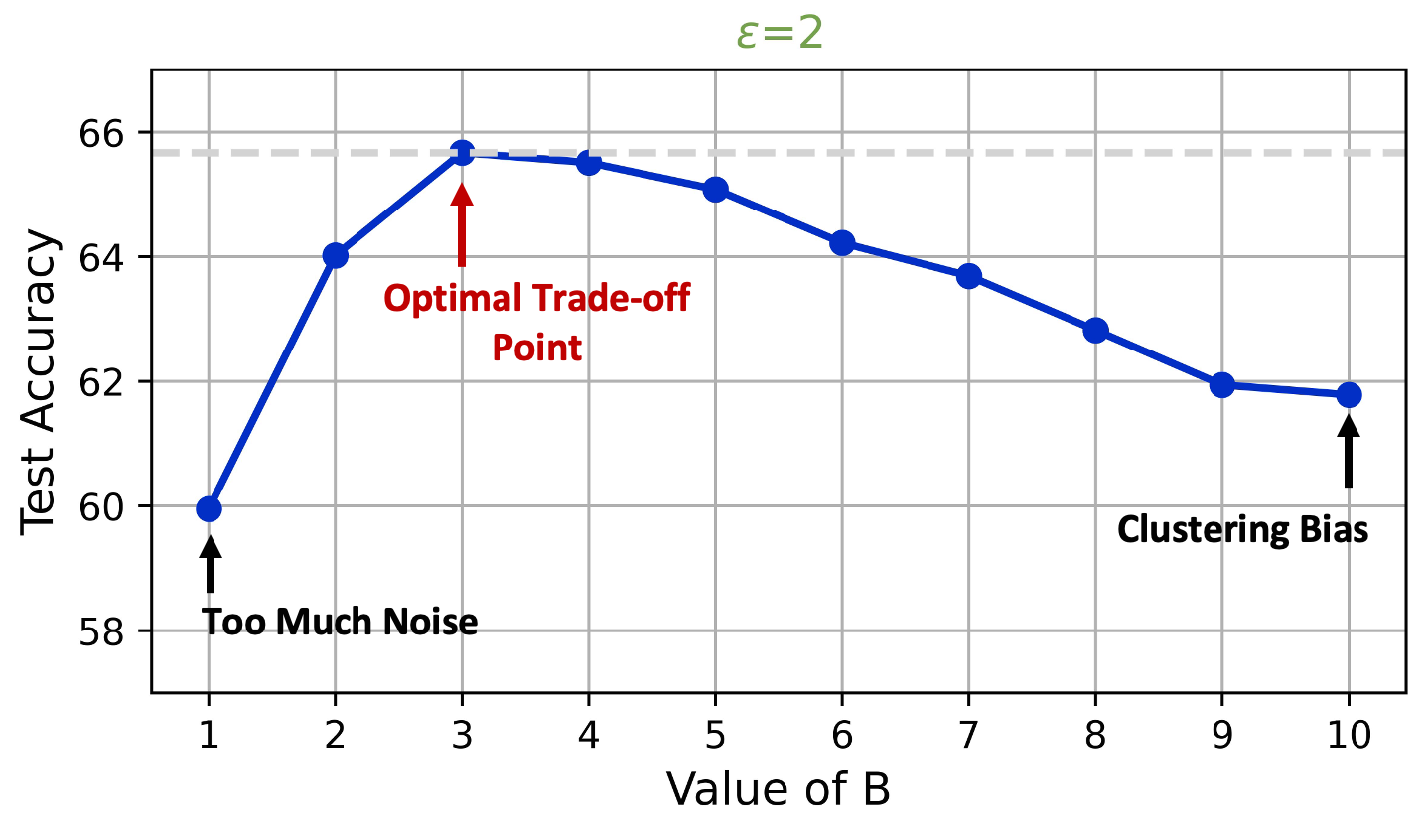}
    \caption{\small Ablation studies on the hyperparameter $B$ on the FashionMNIST dataset. There are a range of $B$'s that lead to improved performance. (Note that the baseline algorithm DP-IFCA in this setting achieves 62.68\% accuracy.) 
    }
    \vspace{0.1in}
    \label{fig:tradeoff_main}
\end{wrapfigure}

\paragraph{Hyperparameter Analysis.} As discussed before, the rebalancing hyperparameter $B$ in our method introduces a tradeoff between privacy noise variance and clustering bias. When $B$ gets larger, more model updates can be sampled from large groups into small ones, leading to potentially larger clustering bias. On the other hand, when $B$ gets smaller, the privacy noise for small clusters may be large as the contributions of a single client in those groups are more difficult to hide. Choosing a proper value of $B$  $(0<B<qM/k)$ is critical for our algorithm. 
We run experiments on the FashionMNIST dataset, with the same image rotation settings in Section~\ref{sec:fashionmnist_text}.
We demonstrate the effects of $B$  in Figure~\ref{fig:tradeoff_main}. By changing the value of $B$, we can find an optimal value that leads to the best tradeoff. We find our method outperforms other private federated clustering baselines for a wide range of $B$'s. 

\paragraph{Clustering Accuracy.} \label{sec:cluster_acc_text}
In addition to the performance of downstream tasks (i.e, learnt model accuracies) reported in previous sections, we also  compare different methods in terms of clustering accuracies---the portion of clients that are correctly clustered.
We experiment on FashionMNIST with balanced rotation as we have access to  the underlying clustering structures. 
We present the success rate in Table~\ref{table:acc_cluster}. Our method achieves the highest clustering accuracy with the increase of $B$. 
Note that our method can exhibit higher clustering accuracies than the baselines even in non-private cases. This implies the seemingly incorrect assignment caused by expanding small clusters may actually help to stabilize clustering, mitigating  model collapse, which often occurs in clustering \citep{fedcam} (also discussed Section~\ref{sec:method:tradeoff}). In the next paragraph, we illustrate this side effect on carefully-generated synthetic data in non-private cases.

\begin{table*}[h]
\vspace{5pt}
\small
\setlength{\abovecaptionskip}{0.2em}
\setlength{\tabcolsep}{11pt}
\centering
		\renewcommand\arraystretch{1}
        \begin{tabular}{l|cccccccc}
		\thickhline
      	\multirow{-1}{*}{Methods~~~} & \textcolor{darkgreen}{$\varepsilon=0.5$}  & \textcolor{darkgreen}{$\varepsilon=2$} & \textcolor{darkgreen}{$\varepsilon=4$}  & \textcolor{darkgreen}{$\varepsilon=8$} & \textcolor{darkgreen}{$\varepsilon=16$}  & \textcolor{darkred}{$\varepsilon\ \rightarrow \infty$} \\
    \hline
    DP-IFCA & 34.37 & 42.18 & 39.06 & 40.62 & 42.18 & 75.00 \\
    DP-FeSEM & - & 42.18 & 39.06 & 32.81 & 50.00 & 46.87 \\
    \hline
    \texttt{RR-Cluster(IFCA)}~B=4 &\text{40.62}&\text{45.31}&\text{43.75}&\text{59.37}&\text{87.50}&\text{98.44}\\
    \texttt{RR-Cluster(IFCA)}~B=6 &\text{40.62}&\text{53.12}&\text{62.50}&\text{87.50}&\text{100.00}&\text{84.37}\\
    \texttt{RR-Cluster(IFCA)}~B=8 &\textbf{40.62}&\textbf{59.37}&\textbf{87.50}&\textbf{98.44}&\textbf{100.00}&\textbf{100.00}\\
    \thickhline
    \end{tabular}
    \caption{\small Accuracy of clustering itself (i.e., percentage of correctly clustered clients) on {FashionMNIST}.}
    \label{table:acc_cluster}
\end{table*}

\paragraph{Case Study on Synthetic Data.}  
To demonstrate \randname's side effect of potentially mitigating model collapse, we conduct experiments on a synthetic dataset. The intuition is that by constraining the number of model updates that are assigned to any cluster is greater than $B$, we can ensure that each cluster model is at least trained using some related data, as opposed to never getting updated. In Appendix~\ref{app:side_effect}, we show that in scenarios where baselines suffer from model collapse, \randname can successfully learn all $k$ clusters.


\section{Conclusion} \label{sec:conclusion}
In this paper, we introduce \randname, a simple add-on that achieves better privacy/utility tradeoffs for federated clustering via a random sampling mechanism that controls client contributions across clusters, thereby reducing the required DP noise. We demonstrate its effectiveness both theoretically and empirically. Further research can explore extending this framework to other clustering problems. 




\bibliographystyle{plainnat}
\bibliography{references}


\newpage
\appendix
\onecolumn

\section{Privacy Analysis}

\subsection{RDP to DP Conversion}
\label{app:rdp_to_dp}
We restate existing RDP-to-DP conversion theorem, as follows.
 \begin{theorem}
    (From RDP to $(\epsilon,\delta)$-DP~\citep{renyi_dp}) 
    If mechanism $\mathcal{M}$ satisfies $(\lambda,\epsilon)$-RDP, then it also satisfies $(\epsilon+\frac{\log(1/\delta)}{\lambda-1},\delta)$-DP for any $\delta \in (0,1)$.
\label{thm:rdp_to_dp}
\end{theorem}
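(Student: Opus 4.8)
The plan is to reduce the conversion to a tail bound on the privacy-loss random variable, exploiting that its exponential moments are exactly what the R\'enyi divergence controls. Fix any pair of adjacent datasets $D, D'$, and (writing probability mass functions, with the density case being identical) define the privacy loss $Z := \log\bigl(\Pr(\mathcal{M}(D)=o)/\Pr(\mathcal{M}(D')=o)\bigr)$ for $o\sim\mathcal{M}(D)$. By Definition~\ref{def:rdp}, $\mathbb{E}_{o\sim\mathcal{M}(D)}\bigl[e^{(\lambda-1)Z}\bigr] = e^{(\lambda-1)D_\lambda(\mathcal{M}(D)\|\mathcal{M}(D'))} \le e^{(\lambda-1)\epsilon}$, so the whole argument rests on this single moment inequality.

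First I would apply Markov's inequality to the nonnegative variable $e^{(\lambda-1)Z}$: for any threshold $\epsilon'$, $\Pr(Z>\epsilon') = \Pr\bigl(e^{(\lambda-1)Z} > e^{(\lambda-1)\epsilon'}\bigr) \le e^{-(\lambda-1)\epsilon'}\,\mathbb{E}\bigl[e^{(\lambda-1)Z}\bigr] \le e^{(\lambda-1)(\epsilon-\epsilon')}$. Choosing $\epsilon' = \epsilon + \tfrac{\log(1/\delta)}{\lambda-1}$ makes the right-hand side equal to $\delta$, giving $\Pr_{o\sim\mathcal{M}(D)}(Z>\epsilon')\le\delta$.

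Next I would decompose an arbitrary output event $O$ according to whether the likelihood ratio is bounded: $\Pr(\mathcal{M}(D)\in O) = \Pr(\mathcal{M}(D)\in O,\, Z\le\epsilon') + \Pr(\mathcal{M}(D)\in O,\, Z>\epsilon')$. The second term is at most $\Pr(Z>\epsilon')\le\delta$. On the event $\{Z\le\epsilon'\}$ we have $\Pr(\mathcal{M}(D)=o)\le e^{\epsilon'}\Pr(\mathcal{M}(D')=o)$ pointwise, so summing over $O\cap\{Z\le\epsilon'\}$ bounds the first term by $e^{\epsilon'}\Pr(\mathcal{M}(D')\in O)$. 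Adding the two pieces gives $\Pr(\mathcal{M}(D)\in O) \le e^{\epsilon'}\Pr(\mathcal{M}(D')\in O) + \delta$ with $\epsilon' = \epsilon + \tfrac{\log(1/\delta)}{\lambda-1}$; since $D,D'$ and $O$ were arbitrary, this is precisely $(\epsilon+\tfrac{\log(1/\delta)}{\lambda-1},\delta)$-DP.

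The main obstacle is not the computation but the measure-theoretic care: one must ensure $\mathcal{M}(D)$ and $\mathcal{M}(D')$ are mutually absolutely continuous on the region where the loss is finite (or treat the singular part separately, noting it only adds to the additive $\delta$), justify writing $D_\lambda$ through the density/mass ratio, and handle the limiting cases $\lambda\to 1$ and $\lambda=\infty$. These steps are standard, so I would simply cite \citet{renyi_dp} for them rather than re-derive them, consistent with the statement here being an explicit restatement of that reference's result.
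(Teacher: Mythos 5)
Your proof is correct. Note, however, that the paper itself does not prove this statement at all: Theorem~\ref{thm:rdp_to_dp} is stated purely as a restatement of Proposition~3 of \citet{renyi_dp}, with the proof deferred entirely to that reference. So the relevant comparison is with Mironov's original argument, and there your route is genuinely different: you bound the tail of the privacy-loss random variable $Z$ via Markov's inequality applied to $e^{(\lambda-1)Z}$, whose expectation under $\mathcal{M}(D)$ is exactly $e^{(\lambda-1)D_\lambda}$ (matching the form of Definition~\ref{def:rdp} in this paper), and then split an arbitrary output event on $\{Z\le\epsilon'\}$ versus $\{Z>\epsilon'\}$. Mironov instead proves a probability-preservation inequality $\Pr[\mathcal{M}(D)\in A]\le\bigl(e^{\epsilon}\Pr[\mathcal{M}(D')\in A]\bigr)^{(\lambda-1)/\lambda}$ via H\"older and finishes with a case analysis on whether $\Pr[\mathcal{M}(D)\in A]\le\delta$. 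Both yield the same $(\epsilon+\tfrac{\log(1/\delta)}{\lambda-1},\delta)$ guarantee; your moment--tail argument is arguably more transparent and is the standard modern presentation, while Mironov's event-wise inequality is what generalizes to his tighter group-privacy and composition statements. Your handling of the edge cases (mutual absolute continuity where the loss is finite, the singular part only feeding the additive $\delta$, and the $\lambda\to 1$, $\lambda=\infty$ limits) by deferring to the cited reference is acceptable given that the paper does the same; the one small thing worth making explicit is that finiteness of $D_\lambda$ for $\lambda>1$ already forces the part of $\mathcal{M}(D)$ singular with respect to $\mathcal{M}(D')$ to be null, so the pointwise ratio defining $Z$ is well defined $\mathcal{M}(D)$-almost surely.
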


\subsection{Sensitivity Analysis}
\label{app:sensitivity}
Continuing from Section~\ref{sec:privacy_analysis}, we study the sensitivity of our clustering updates each iteration for client-level DP. 
In global round $t$, we select $qM$ clients to update $k$ cluster models.
Denote $\Delta{\theta}^t_i$ as the model updates from the $i$-th client at round $t$. By clipping, the $L_2$-norm of each update $\Delta{\theta}^t_{i}$ are limited within $[0,C_\theta]$. 
According to Algorithm~\ref{alg:pseudo-code-general}, we add noise to the sum of cluster updates within each cluster.  In the following, we investigate the $L_2$ sensitivity of the sum of cluster updates for any cluster under the client-level DP setup. 
\begin{proof}
Suppose that we have two adjacent client sets, $\Theta$ and $\Theta'$, where $\Theta'$ has one more/less client than $\Theta^t$. 
We denote the added/removed update as $\Delta{\theta}_A$.
We note that adding/removing a client may or may not affect other groups of model updates based on our algorithm. Without loss of generality, we assume the added/removed client update belongs to cluster $1$. We discuss two cases  as follows.

\textbf{Case~1}: Adding or removing one client belonging to the groups assigned to cluster 1 does not trigger random sampling and rebalancing. For instance, cluster 1 after removing one client still satisfies that the cardinality is at least $B$. For the sensitivity of the function outputting $k$ cluster's model updates, we only need to consider cluster 1, as all other groups remain unchanged. Denote the set of model updates assigned to cluster 1 before removing $\Delta \theta_{A}$ as $\{\Delta\theta_{1,1}, \cdots, \Delta\theta_{1,n_1}, \Delta{\theta}_{A}\}$. Then the sensitivity of summing up model updates in cluster 1 can be upper bounded by
\begin{equation}
\begin{aligned}
\max_{\Theta, \Theta'} ~\left\lVert\left(\Delta\theta_{1,1}+...+\Delta\theta_{1,n_1}+\Delta{\theta}_{A}\right)  - \left(\Delta\theta_{1,1}+...+\Delta\theta_{1,n_1}\right)\right\lVert = \lVert \Delta\theta_{A} \lVert \leq C_\theta.
\end{aligned}
\end{equation}
If we add $\Delta \theta_A$, the same sensitivity holds.

\textbf{Case~2}: The added/removed client triggers random sampling from large clusters to expand small ones. For example, removing a client results in cluster 1 originally having $B$ updates to only have $B-1$ updates, in which case we would need to sample one model update from another cluster (denoted as $\Delta \theta'$) and insert it into cluster 1. 
The sensitivity of summing up model updates in cluster 1 is 
\begin{equation}
\begin{aligned}
&\max_{\Theta, \Theta'}  ~\lVert\left(\Delta\theta_{1,1}+...+\Delta\theta_{1,n_1}+\Delta{\theta}'\right)  - \left(\Delta\theta_{1,1}+...+\Delta\theta_{1,n_1}+\Delta\theta_A\right)\lVert\\
&\leq \lVert \Delta\theta_{A} \lVert + \lVert \Delta\theta' \lVert \leq 2C_\theta.
\end{aligned}
\end{equation}
Similarly, we have that the sensitivity corresponding to the cluster that we sample $\Delta \theta'$ from is $C_{\theta}$. 

Hence, we prove the overall sensitivity is at most $2C_{\theta}$. Combining with the fact that we privatize each cluster model separately and $k$ new cluster model are obtained based on disjoint sets of clients, we can obtain desired privacy bounds.
\end{proof}

\subsection{Privacy Guarantees}
\label{app:priv_guarantee_rdp}
Here we provide the proof of Theorem~\ref{thm:overall_privacy}. We first introduce the definition of R\'enyi differential privacy and several theorems used in our privacy analysis.

\begin{definition} [R\'enyi differential privacy~\citep{renyi_dp}]
    \label{thm:def_rdp}
    A randomized mechanism $f : \mathcal{D} \to \mathcal{R}$ is said to have $\varepsilon$-R\'enyi differential privacy of order $\alpha$, or$( \alpha, \varepsilon )$-RDP for short, if for any adjacent $D, D' \in \mathcal{D}$ it holds that
    \begin{equation}
        D_\alpha(f(D) \| f(D')) = \frac{1}{\alpha - 1} \log \mathbb{E}_{x \sim f(D')} \left( \frac{f(D)}{f(D')} \right)^\alpha \leq \varepsilon.
    \end{equation}
\end{definition}

\begin{theorem} [RDP adaptive composition~\citep{renyi_dp}]
   \label{thm:rdp_composition} 
   Let $f : \mathcal{D} \to \mathcal{R}_1$ be $(\alpha, \varepsilon_1)$-RDP and $g : \mathcal{R}_1 \times \mathcal{D} \to \mathcal{R}_2$ be $(\alpha, \varepsilon_2)$-RDP, then the mechanism defined as $(X, Y)$, where $X \sim f(D)$ and $Y \sim g(X, D)$, satisfies $(\alpha, \varepsilon_1 + \varepsilon_2)$-RDP.
\end{theorem}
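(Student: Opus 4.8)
The plan is to work directly with the definition of R\'enyi divergence and exploit the product structure of the joint output distribution. Write $P$ for the law of $(X,Y)$ under $D$ and $Q$ for its law under $D'$, and recall that it suffices to bound the quantity $\exp\big((\alpha-1)D_\alpha(P\|Q)\big) = \mathbb{E}_{(x,y)\sim Q}\big[(P(x,y)/Q(x,y))^{\alpha}\big] = \int\!\!\int Q(x,y)^{1-\alpha}P(x,y)^{\alpha}\,dx\,dy$ by $\exp\big((\alpha-1)(\varepsilon_1+\varepsilon_2)\big)$; taking logarithms and dividing by $\alpha-1>0$ then yields the claim. Throughout I treat the laws as densities against a common base measure, so the manipulations below are integral identities.

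The first step is to factor the joint densities according to the adaptive structure. Since $X\sim f(D)$ and $Y\sim g(X,D)$, we have $P(x,y)=f(D)(x)\,g(x,D)(y)$ and likewise $Q(x,y)=f(D')(x)\,g(x,D')(y)$. Substituting and applying Fubini, the double integral factors as $\int f(D')(x)^{1-\alpha}f(D)(x)^{\alpha}\Big(\int g(x,D')(y)^{1-\alpha}g(x,D)(y)^{\alpha}\,dy\Big)dx$, isolating an inner integral over the second output conditioned on the first.

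The key step is to bound the inner integral uniformly in $x$. For each fixed first argument $x\in\mathcal{R}_1$, the map $g(x,\cdot)$ is a mechanism on $\mathcal{D}$, and the hypothesis that $g$ is $(\alpha,\varepsilon_2)$-RDP means precisely that $D_\alpha(g(x,D)\|g(x,D'))\le\varepsilon_2$ for every such $x$ and every adjacent pair. Hence the inner integral equals $\exp\big((\alpha-1)D_\alpha(g(x,D)\|g(x,D'))\big)\le\exp\big((\alpha-1)\varepsilon_2\big)$, a bound that does not depend on $x$. Pulling this constant out of the outer integral leaves $\exp\big((\alpha-1)\varepsilon_2\big)\int f(D')(x)^{1-\alpha}f(D)(x)^{\alpha}\,dx = \exp\big((\alpha-1)\varepsilon_2\big)\exp\big((\alpha-1)D_\alpha(f(D)\|f(D'))\big)$, and the $(\alpha,\varepsilon_1)$-RDP guarantee for $f$ bounds the last factor by $\exp\big((\alpha-1)\varepsilon_1\big)$. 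Multiplying yields the target $\exp\big((\alpha-1)(\varepsilon_1+\varepsilon_2)\big)$.

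The main obstacle is conceptual rather than computational: one must verify that the RDP guarantee for the second mechanism holds \emph{uniformly} over all possible values of the auxiliary input $x$ produced by the first mechanism, since this uniformity is exactly what licenses pulling $\exp((\alpha-1)\varepsilon_2)$ outside the outer integral. This is where adaptivity is handled---the output of $f$ feeds into $g$, but because $g$'s privacy bound is worst-case over its first argument, the composed bound is insensitive to any correlation between $X$ and $D$. A minor technical point is to restrict to $\alpha>1$ so that dividing the final log-inequality by $\alpha-1$ preserves its direction; the boundary orders $\alpha\in\{1,\infty\}$ then follow by the usual continuity of R\'enyi divergence in $\alpha$.
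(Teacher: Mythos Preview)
Your proof is correct and is essentially the standard argument from Mironov's original R\'enyi DP paper. Note that the present paper does not actually prove this theorem; it merely restates it as a cited background result, so there is no in-paper proof to compare against. Your factorization of the joint density, the uniform-in-$x$ bound on the inner integral via the $(\alpha,\varepsilon_2)$-RDP hypothesis on $g$, and the subsequent application of the $(\alpha,\varepsilon_1)$-RDP bound on $f$ track the original proof exactly.
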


\begin{theorem} [RDP with Gaussian mechanism~\citep{renyi_dp}]
    \label{thm:gm_in_rdp}
    If \( f \) has sensitivity \( C \), then the Gaussian mechanism \( G_{\sigma} f(D) = f(D) + N(0, C^2\sigma^2) \) satisfies \( \left( \alpha, \frac{\alpha}{2\sigma^2} \right) \)-RDP.
\end{theorem}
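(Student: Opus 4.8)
The plan is to prove Theorem~\ref{thm:gm_in_rdp}, the standard Rényi-DP guarantee of the Gaussian mechanism, by directly computing the Rényi divergence between the two output distributions induced on a pair of adjacent datasets. First I would fix adjacent datasets $D, D'$ and write the outputs of the mechanism as $G_\sigma f(D) \sim \mathcal{N}(f(D), C^2\sigma^2 I)$ and $G_\sigma f(D') \sim \mathcal{N}(f(D'), C^2\sigma^2 I)$. The key observation is that the Rényi divergence of order $\alpha$ between two Gaussians with the \emph{same} covariance depends only on the squared distance between their means, scaled by the variance. Because $f$ has sensitivity $C$, this squared distance satisfies $\|f(D) - f(D')\|^2 \leq C^2$, which is the only property of the datasets I will use.

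The central computation is the closed-form Rényi divergence between two Gaussians of common variance $s^2 = C^2\sigma^2$. For univariate Gaussians $\mathcal{N}(\mu_0, s^2)$ and $\mathcal{N}(\mu_1, s^2)$, I would plug the Gaussian densities into the definition $D_\alpha(P\|Q) = \frac{1}{\alpha-1}\log \mathbb{E}_{x \sim Q}\left[\left(\frac{P(x)}{Q(x)}\right)^\alpha\right]$, expand the Gaussian exponents, and complete the square in the integrand. The cross terms produce a Gaussian integral that evaluates to $1$, leaving the clean formula $D_\alpha\bigl(\mathcal{N}(\mu_0,s^2)\,\|\,\mathcal{N}(\mu_1,s^2)\bigr) = \frac{\alpha(\mu_0-\mu_1)^2}{2s^2}$. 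For the multivariate case the same completion-of-the-square argument factorizes coordinatewise (or one diagonalizes the isotropic covariance), giving $D_\alpha = \frac{\alpha\|f(D)-f(D')\|^2}{2s^2}$.

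Substituting $s^2 = C^2\sigma^2$ and the sensitivity bound $\|f(D)-f(D')\|^2 \leq C^2$ yields
\begin{equation}
D_\alpha\bigl(G_\sigma f(D)\,\|\,G_\sigma f(D')\bigr) = \frac{\alpha\,\|f(D)-f(D')\|^2}{2C^2\sigma^2} \leq \frac{\alpha\,C^2}{2C^2\sigma^2} = \frac{\alpha}{2\sigma^2},
\end{equation}
which is exactly the claimed $\left(\alpha, \frac{\alpha}{2\sigma^2}\right)$-RDP bound after noting the divergence is symmetric in $D, D'$ here so the same bound holds for the reverse pair. The main obstacle, such as it is, lies only in the completion-of-the-square bookkeeping: one must carefully track the $\alpha$ and $(1-\alpha)$ coefficients in front of the two quadratic forms so that the residual quadratic in $x$ still has positive coefficient $\frac{1}{2s^2}$ (ensuring the Gaussian integral converges) while the constant term collects into the stated expression. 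Since the covariances are identical, this coefficient is automatically positive for every $\alpha > 1$, so no extra moment or range restriction on $\alpha$ is needed, unlike the subsampled or heavy-tailed settings. I would therefore present the univariate computation in full and remark that the multivariate extension is immediate by the coordinatewise factorization of the isotropic Gaussian.
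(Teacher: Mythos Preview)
Your proposal is correct and is exactly the standard argument from \citet{renyi_dp}: compute the closed-form R\'enyi divergence between two Gaussians with equal covariance via completing the square, then apply the sensitivity bound. The paper does not give its own proof of this theorem---it is stated with a citation as a known result and used as a black box in the proof of Proposition~\ref{thm:rdp_single_round}---so there is nothing further to compare.
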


\begin{theorem} [RDP with subsampling~\citep{subsample_rdp}]
    \label{thm:sampling_in_rdp}
    Given a randomized mechanism $\mathcal{M}$, and let the randomized algorithm $\mathcal{M} \circ \text{subsample}$ be defined as: (1) \textit{subsample}: subsample with subsampling rate $q$; (2) \textit{apply} $\mathcal{M}$: a randomized algorithm taking the subsampled dataset as the input. For all integers $\alpha \geq 2$, if $\mathcal{M}$ is $(\alpha, \varepsilon)\text{-RDP}$, then $\mathcal{M} \circ \text{subsample}$ is $(\alpha, \varepsilon_q)\text{-RDP}$ where
    \begin{equation}
    \begin{aligned}
        \varepsilon_q \leq & \frac{1}{\alpha - 1} \log \left( 1 + q^2 \binom{\alpha}{2} \min \left\{ 4(e^{\varepsilon(2)} - 1), e^{\varepsilon(2)} \min \left[ 2, (e^{\varepsilon(\infty)} - 1)^2 \right] \right\} \right) \\
        & + \sum_{j=3}^{\alpha} q^j \binom{\alpha}{j} e^{(j-1)\varepsilon(j)} \min \left\{ 2, (e^{\varepsilon(\infty)} - 1)^j \right\}.
    \end{aligned}
    \end{equation}
\label{thm:subsample}
\end{theorem}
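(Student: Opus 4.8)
The plan is to fix an arbitrary adjacent pair $D, D'$ (differing in one record) and directly control the order-$\alpha$ Rényi divergence of the two subsampled output distributions, showing it never exceeds $\varepsilon_q$. Write $P := \mathcal{M}\circ\mathrm{subsample}(D)$ and $Q := \mathcal{M}\circ\mathrm{subsample}(D')$; by the definition of RDP it suffices to upper bound $\mathbb{E}_{o\sim Q}[(P(o)/Q(o))^\alpha]$ by the argument of the logarithm in the claimed bound. Without loss of generality take $D' = D\cup\{x'\}$ and exploit the defining structure of sampling without replacement: conditioning on whether the extra record $x'$ is drawn into the subsample partitions the sampling process, with $x'$ selected with probability $q$. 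The crux is a coupling/mixture decomposition that writes $Q = (1-q)P_0 + q P_1$, where $P_0 = P$ is $\mathcal{M}$ applied to a subsample not containing $x'$ and $P_1$ is $\mathcal{M}$ applied to a subsample of the same size that does contain $x'$; the key point is that $P_0$ and $P_1$ are outputs of $\mathcal{M}$ on inputs differing in a single record, so \emph{all} the RDP guarantees $\varepsilon(\cdot)$ of $\mathcal{M}$ apply to the pair $(P_0,P_1)$.

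Next I would expand the moment with the binomial theorem around $Q$, writing $\mathbb{E}_Q[(P/Q)^\alpha] = \sum_{j=0}^{\alpha}\binom{\alpha}{j}\mathbb{E}_Q[((P-Q)/Q)^j]$. The $j=0$ term equals $1$ and the $j=1$ term vanishes because $\int(P-Q)=0$ (both are probability measures), which is precisely why the amplified bound carries the leading ``$1+{}$'' and begins its nontrivial contributions at $j=2$. Since $P-Q = P_0 - \big((1-q)P_0+qP_1\big) = q(P_0-P_1)$, every remaining term factors as $q^j\binom{\alpha}{j}\mathbb{E}_Q[((P_0-P_1)/Q)^j]$, producing exactly the $q^j\binom{\alpha}{j}$ prefactors in the statement. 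It then remains to bound the reduced moments $\mathbb{E}_Q[((P_0-P_1)/Q)^j]$ purely in terms of the RDP profile of $\mathcal{M}$.

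For each $j\ge 3$ I would bound this reduced moment two different ways and keep the smaller. The first route uses the pointwise max-divergence (pure-DP) guarantee: since $D_\infty(P_0\|P_1)\le\varepsilon(\infty)$ controls the density ratio uniformly, the centered ratio satisfies $|(P_0-P_1)/Q|\le e^{\varepsilon(\infty)}-1$ pointwise, which combined with the order-$j$ RDP bound $\mathbb{E}_{P_1}[(P_0/P_1)^j]\le e^{(j-1)\varepsilon(j)}$ (after a Hölder step rewriting the mixture base $Q$ in terms of $P_1$) yields the factor $e^{(j-1)\varepsilon(j)}(e^{\varepsilon(\infty)}-1)^j$. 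The second route discards the max-divergence and bounds the centered moment by the universal constant $2$, using that $P_0,P_1$ are probability densities so the signed quantity integrated against the mixture base cannot blow up, giving $2\,e^{(j-1)\varepsilon(j)}$; the minimum of the two produces the $\min\{2,(e^{\varepsilon(\infty)}-1)^j\}$ factor. The $j=2$ term warrants a separate, sharper treatment: the second moment relates directly to the $\chi^2$-type quantity governed by $(2,\varepsilon(2))$-RDP, i.e. $e^{\varepsilon(2)}-1$, yielding the tighter $\min\{4(e^{\varepsilon(2)}-1),\,e^{\varepsilon(2)}\min[2,(e^{\varepsilon(\infty)}-1)^2]\}$ constant quoted in the theorem.

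Finally, collecting the $j=2$ contribution together with the $j=3,\dots,\alpha$ sum inside the logarithm and dividing by $\alpha-1$ reproduces exactly $\varepsilon_q$; taking the supremum over adjacent $D,D'$ completes the RDP claim. I expect the main obstacle to be the third step: converting the reduced moments, which are integrated against the mixture base measure $Q$, into RDP-controlled moments of $\mathcal{M}$ between $P_0$ and $P_1$ requires the correct Hölder split, careful sign tracking for the not-necessarily-nonnegative quantity $P_0-P_1$, and justification of the exact constants (the universal $2$, and the factor $4$ at $j=2$). A secondary technical point is the clean coupling for sampling without replacement when $D$ and $D'$ differ in size: one must verify that the conditional subsample of the larger dataset can be matched to a subsample of the smaller one so that $P_0$ and $P_1$ genuinely differ in a single record, which is what makes $\mathcal{M}$'s RDP guarantees applicable in the first place.
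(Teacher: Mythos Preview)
The paper does not prove this theorem; it is stated in the appendix as a cited result from \cite{subsample_rdp} and is used only as a black-box tool in the privacy accounting for \randname. There is therefore no in-paper proof to compare your proposal against.

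That said, your sketch correctly reconstructs the argument of the original reference: the mixture decomposition $Q=(1-q)P_0+qP_1$ under the coupling, the binomial expansion of $\mathbb{E}_Q[(P/Q)^\alpha]$ about $1$ so that the $j=0,1$ terms contribute $1$ and $0$, the extraction of the $q^j\binom{\alpha}{j}$ prefactors via $P-Q=q(P_0-P_1)$, and the two competing bounds on the centered moments (one via $\varepsilon(\infty)$, one via the universal constant $2$) with a sharper $\chi^2$-type treatment at $j=2$. The two obstacles you flag---the H\"older/base-measure change from $Q$ to $P_1$ with sign tracking, and the precise coupling for sampling without replacement between datasets of different sizes---are exactly the places where the full proof in \cite{subsample_rdp} does the real work, so your assessment of where the difficulty lies is accurate.
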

\vspace{-0.5cm}

In Theorem~\ref{thm:subsample}, DP parameter $\varepsilon$ is denoted as $\varepsilon(\alpha)$, since $\varepsilon$ can be viewed as a function of RDP parameter $\alpha$ for $1 \leq \alpha \leq \infty$~\citep{subsample_rdp}.

\begin{theorem} [Converting RDP into $(\varepsilon,\delta)$-DP~\citep{renyi_dp}]
\label{thm:rdp2dp}
    If $f$ is an $(\alpha, \varepsilon)$-RDP mechanism, it also satisfies $(\varepsilon + \frac{\log 1/\delta}{\alpha - 1}, \delta)$-differential privacy for any $0 < \delta < 1$.
\end{theorem}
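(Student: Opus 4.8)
The plan is to prove this RDP-to-DP conversion by the standard tail-bound argument on the privacy-loss random variable, controlling its exponential moment directly through the RDP hypothesis and then applying Markov's inequality. Throughout, write $P := f(D)$ and $Q := f(D')$ for the output distributions on a fixed pair of adjacent inputs, and let $L(x) := \log\frac{P(x)}{Q(x)}$ denote the privacy loss; recall $\alpha > 1$ so that $x \mapsto e^{(\alpha-1)x}$ is increasing. Set the target DP parameter $\varepsilon' := \varepsilon + \frac{\log(1/\delta)}{\alpha-1}$. The goal is to verify $\Pr(P \in S) \le e^{\varepsilon'}\Pr(Q \in S) + \delta$ for every measurable event $S$, which is exactly the claimed $(\varepsilon',\delta)$-DP guarantee; since adjacency is symmetric, establishing this for every ordered adjacent pair also covers the reverse direction.

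First I would decompose an arbitrary event $S$ according to the magnitude of the privacy loss. Writing $S = (S \cap \{L \le \varepsilon'\}) \cup (S \cap \{L > \varepsilon'\})$, the low-loss part contributes at most $e^{\varepsilon'}\Pr(Q \in S)$, since on $\{L \le \varepsilon'\}$ we have $P(x) \le e^{\varepsilon'}Q(x)$ pointwise, while the high-loss part contributes at most $\Pr_{x\sim P}(L(x) > \varepsilon')$ by dropping the intersection with $S$. Hence it suffices to show the tail bound $\Pr_{x\sim P}(L(x) > \varepsilon') \le \delta$.

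The key step is to translate the RDP hypothesis into an exponential-moment bound under $P$. Unfolding the definition, $(\alpha,\varepsilon)$-RDP states $\mathbb{E}_{x\sim Q}[(P(x)/Q(x))^{\alpha}] \le e^{(\alpha-1)\varepsilon}$, and a change of measure rewrites this as an exponential moment of the privacy loss under $P$:
\begin{equation}
\mathbb{E}_{x\sim P}\left[e^{(\alpha-1)L(x)}\right] = \mathbb{E}_{x\sim P}\left[\left(\tfrac{P(x)}{Q(x)}\right)^{\alpha-1}\right] = \mathbb{E}_{x\sim Q}\left[\left(\tfrac{P(x)}{Q(x)}\right)^{\alpha}\right] \le e^{(\alpha-1)\varepsilon}. \nonumber
\end{equation}
Applying Markov's inequality to the nonnegative variable $e^{(\alpha-1)L(x)}$ then gives $\Pr_{x\sim P}(L(x) > \varepsilon') \le e^{-(\alpha-1)\varepsilon'}\,\mathbb{E}_{x\sim P}[e^{(\alpha-1)L(x)}] \le e^{(\alpha-1)(\varepsilon-\varepsilon')}$. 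Substituting the chosen $\varepsilon'$ makes the exponent equal $\log\delta$, so the tail probability is at most $\delta$, which together with the decomposition above yields the DP guarantee.

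The main technical care lies in the measure-theoretic handling of the likelihood ratio on the set where $Q$ vanishes, i.e. where $P$ is not absolutely continuous with respect to $Q$. If that set carries positive $P$-mass the R\'enyi divergence is infinite and the hypothesis is vacuous; otherwise all integrals may be restricted to the support of $Q$ without loss, and the change-of-measure identity in the display is valid there. Apart from this routine support bookkeeping, every remaining step is a direct application of Markov's inequality and the definition of RDP, so I expect no substantive obstacle.
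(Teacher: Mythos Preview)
Your proof is correct and is precisely the standard argument from the original reference (Mironov, 2017): bound the exponential moment of the privacy loss via the change-of-measure identity, apply Markov's inequality to control the tail $\Pr_{x\sim P}(L(x)>\varepsilon')$, and combine with the pointwise bound on the low-loss region. The paper itself does not prove this theorem; it merely restates the result and cites \citep{renyi_dp}, so there is no alternative approach to compare against---your argument is exactly what the cited source does.
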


We now provide the overall privacy guarantee of \randname, stated in Theorem~\ref{thm:rdp_T_rounds}. We analyze the privacy budget by considering three steps: (1) adaptive composition of private identifier and private model parameters in a single round, (2) privacy amplification with subsampling ratio $q$, and (3) composition over $T$ rounds.

We first provide the privacy guarantee for a single round \randname without subsampling. We restate the Proposition~\ref{thm:rdp_single_round} in the main text and provide the proof.


\begin{proof}
    According to Theorem~\ref{thm:gm_in_rdp}, if using noise $\sigma_\theta=\sqrt{\frac{\alpha}{2\varepsilon_2}}$, we can calculate the privacy budget $\varepsilon$ as $\frac{\alpha}{2\sigma_\theta^2} = \varepsilon_2$. Which means for model parameters, it gives $(\alpha, \varepsilon_2)$-RDP. We then use adaptive RDP composition using Theorem~\ref{thm:rdp_composition} to compose budget $\varepsilon_1$ for identifiers and $\varepsilon_2$ for parameters to get $(\alpha, \varepsilon_1+\varepsilon_2)$-RDP in total.
\end{proof}

We then amplify the $(\alpha, \varepsilon_1+\varepsilon_2)$-RDP budget with client subsampling with ratio $q$ and then compose it over $T$ rounds to provide overall privacy guarantee as follows. We denote $\varepsilon$ as $\varepsilon(\alpha)$ as in Theorem~\ref{thm:subsample}.


\rdpTrounds*

\vspace{-0.5cm}
\setlength{\abovedisplayskip}{2pt}
\begin{align*}
\begin{split}
    \varepsilon(\alpha) &\leq \frac{T}{\alpha - 1} \cdot \log \biggl( 1 + q^2 \binom{\alpha}{2} \min \Bigl\{ 4(e^{\varepsilon_1(2) + \varepsilon_2(2)} - 1), \quad e^{\varepsilon_1(2) + \varepsilon_2(2)} \min \{ 2, (e^{\varepsilon_1(\infty)+\varepsilon_2(\infty)} - 1)^2 \} \Bigr\} \\
    &+ \sum_{j=3}^{\alpha} q^j \binom{\alpha}{j} e^{(j-1)(\varepsilon_1(j)+\varepsilon_2(j)}) \min \{ 2,(e^{\varepsilon_1(\infty)+\varepsilon_2(\infty)} - 1)^j \} \biggr).\\
\end{split}
\end{align*}

\begin{proof}
    Given $(\alpha, \varepsilon_1+\varepsilon_2)$-RDP in one round with sampling rate $q$, we can use Theorem~\ref{thm:sampling_in_rdp} as
    \begin{align}
        \varepsilon(\alpha) &\leq \frac{1}{\alpha - 1} \cdot \log \biggl( 1 + q^2 \binom{\alpha}{2} \min \Bigl\{ 4(e^{\varepsilon_1(2) + \varepsilon_2(2)} - 1), \quad e^{\varepsilon_1(2) + \varepsilon_2(2)} \min \{ 2, (e^{\varepsilon_1(\infty)+\varepsilon_2(\infty)} - 1)^2 \} \Bigr\} \nonumber \\
        &+ \sum_{j=3}^{\alpha} q^j \binom{\alpha}{j} e^{(j-1)(\varepsilon_1(j)+\varepsilon_2(j)}) \min \{ 2,(e^{\varepsilon_1(\infty)+\varepsilon_2(\infty)} - 1)^j \} \biggr).
    \end{align}
    Then, we compose $\varepsilon_q$ over $T$ rounds still using adaptive composition as in Theorem~\ref{thm:rdp_composition} as 
    \setlength{\belowdisplayskip}{2pt}
    \begin{align}
        \varepsilon(\alpha) &\leq \frac{T}{\alpha - 1} \cdot \log \biggl( 1 + q^2 \binom{\alpha}{2} \min \Bigl\{ 4(e^{\varepsilon_1(2) + \varepsilon_2(2)} - 1), \quad e^{\varepsilon_1(2) + \varepsilon_2(2)} \min \{ 2, (e^{\varepsilon_1(\infty)+\varepsilon_2(\infty)} - 1)^2 \} \Bigr\} \nonumber \\
        &+ \sum_{j=3}^{\alpha} q^j \binom{\alpha}{j} e^{(j-1)(\varepsilon_1(j)+\varepsilon_2(j)}) \min \{ 2,(e^{\varepsilon_1(\infty)+\varepsilon_2(\infty)} - 1)^j \} \biggr).
    \end{align}
\end{proof}

Finally, we can convert the $(\alpha, \varepsilon (\alpha))$-RDP bound DP using Theorem~\ref{thm:rdp2dp}. We have that for any $\delta \in (0,1)$ (we used $0.001 < 1/M$ in our experiemtns), our algorithm satisfies $(\varepsilon, \delta)$-DP where
\begin{align}
        \varepsilon = & \varepsilon(\alpha) + \frac{\log{1/\delta}}{\alpha-1} \nonumber \\
        \leq & \frac{T}{\alpha - 1} \cdot \log \biggl( 1 + q^2 \binom{\alpha}{2} \min \Bigl\{ 4(e^{\varepsilon_1(2) + \varepsilon_2(2)} - 1), \quad e^{\varepsilon_1(2) + \varepsilon_2(2)} \min \{ 2, (e^{\varepsilon_1(\infty)+\varepsilon_2(\infty)} - 1)^2 \} \Bigr\} \nonumber \\
        &+ \sum_{j=3}^{\alpha} q^j \binom{\alpha}{j} e^{(j-1)(\varepsilon_1(j)+\varepsilon_2(j)}) \min \{ 2,(e^{\varepsilon_1(\infty)+\varepsilon_2(\infty)} - 1)^j \} \biggr).
\end{align}


\section{Convergence Analysis}

\subsection{Proof of Lemma~\ref{lemma:mis_cls}}
\label{app:err_cluster_prob}
\label{app:convergence}
In this section, we present convergence analysis of the proposed method. In order to streamline our analysis, we make several simplifications: we assume that all clients participate in every rounds. In addition, when considering the clipping operation, we assume the clipping threshold is large enough thus did not introducing bias.



\begin{proof}
Suppose we have cluster models $\{\theta_j\}_{j \in [k]}$, and assume that after certain steps, we have $\|\theta_j - \theta_j^*\| \leq (\frac{1}{2}-\beta)\sqrt{\frac{\lambda}{L}}\Delta, j \in [k]$.
We first analyze the probability of incorrectly clustering a client taking into consideration the following three factors.
\begin{itemize}
    \item Factor 1. Clients selecting their clusters by calculating training losses can introduce some inherent clustering error. We denote the probability of such incorrect  clustering as $\Pr(E)$. 
    \item Factor 2. Clients privatizing cluster identifiers introduces additional error, where the probability is denoted as $\Pr(H)$.
    \item Factor 3. \randname resamples updates from large to small groups, which also increases clustering error, denoted as $\Pr(O)$.
\end{itemize}

\textbf{Factor 1}: 
We begin with the definition of $E_i^{j, j'}$, which denotes an event where in a certain global round, client $i$ who belongs to cluster $j$ by ground truth, however been changed its cluster assignment to cluster $j' (j' \neq j)$ , due to the min-loss cluster selection strategy. 
We bound this probability of erroneous cluster selection from Factor 1 as follows. Without loss of generality, we consider $E_i^{1, j}$. We have
\begin{equation}
\Pr\left(E_i^{1, j}\right) \leq \Pr\left(F_i(\theta_1) \geq F_i(\theta_{j})\right)\leq \Pr\left(F_i(\theta_1)>t\right)+\Pr\left(F_i(\theta_{j})\leq t\right)
\end{equation}
for all $t \geq 0$. Choosing $t=\frac{F^1(\theta_1)+F^1(\theta_j)}{2}$, we have
\begin{equation}
\begin{aligned}
\Pr(F_i(\theta_1)>t) =& \Pr\left(F_i(\theta_1) > \frac{F^1(\theta_1)+F^1(\theta_j)}{2}\right) \\
=& \Pr \left(F_i(\theta_1) - F^1(\theta_1) > \frac{F^1(\theta_j)-F^1(\theta_1)}{2}\right)
\end{aligned}
\end{equation}
for the first term. Considering that $F^1$ is $\lambda$-strongly convex and $L$-smooth and the assumption that $\|\theta_j^t-\theta_j^*\|\leq (\frac{1}{2}-\beta) \sqrt{\frac{\lambda}{L} \Delta}$ where $\Delta := \min_{j \neq j'} \|\theta_j^*-\theta_{j'}^*\|$, we have
\begin{equation}
F^1(\theta_j) \geq F^1(\theta_1^*) + \frac{\lambda}{2}\|\theta_j - \theta_1^*\|^2 \geq F^1(\theta_1^*) + \frac{\lambda \Delta^2}{2}(\frac{1}{2}+\beta)^2
\end{equation}
and
\begin{equation}
F^1(\theta_1) \leq F^1(\theta_1^*) + \frac{L}{2}\|\theta_1 - \theta_1^*\|^2 \leq  F^1(\theta_1^*) + \frac{\lambda \Delta^2}{2}(\frac{1}{2}-\beta)^2.
\end{equation}
Thus we have that
\begin{equation}
 \frac{F^1(\theta_j)-F^1(\theta_1)}{2} \geq \frac{\lambda \Delta^2}{2}(\frac{1}{2}+\beta)^2 - \frac{\lambda \Delta^2}{2}(\frac{1}{2}-\beta)^2 = \frac{\beta \lambda \Delta^2}{2}.
\end{equation}
According to the Chebyshev's inequality and the bounded variance assumption, we obtain $\Pr(F_i(\theta_1)>t) \leq \frac{4\eta^2}{\beta^2 \lambda^2 \Delta^4}$ and similarly,  $\Pr(F_i(\theta_j)\leq t) \leq \frac{4\eta^2}{\beta^2 \lambda^2 \Delta^4}$, and thus $\Pr(E_i^{j, j'}) \leq \frac{8\eta^2}{\beta^2 \lambda^2 \Delta^4}$.\\

\textbf{Factor 2}: We analyze the probability that adding random Gaussian noise to the one-hot cluster identifiers changes the index with the maximum value. 
Let $\sigma_s^2$ denote the variance of the Gaussian noise. We consider the event $H_i^{j, {j'}}$, which means client $i$ is assigned to some cluster $j$ ($j \in [k]$) by the min-loss cluster selection strategy, but is changed into cluster $j'$ due to the added noise to guarantee differential privacy. In this step, we add Gaussian noise with variance $\sigma_s^2$ to each coordinate of the one-hot  $s_i$, resulting in $\widetilde{s_i}$. Therefore,
\begin{equation}
    \Pr(H_i^{j, {j'}}) \leq \Pr\left(1+\mathcal{N}(0,\sigma_s^2) \leq 0+\mathcal{N}(0,\sigma_s^2) \right) = \Pr(\mathcal{N}(0,2\sigma_s^2) \leq -1).
\end{equation}
Using the tail bound of Gaussian distributions $\Pr(x\geq t) \leq \frac{\sigma_s}{t\sqrt{2\pi}}\exp(-t^2/2\sigma_s^2)$, we have
\begin{equation}
    \Pr(H_i^{j, {j'}}) \leq \frac{\sigma_s}{\sqrt{\pi}}\exp(-1/4\sigma_s^2).
\end{equation}



\noindent
\textbf{Factor 3}:
We finally analyze the probability of event $O_i^{j, j'}$, where client $i$ is assigned to cluster $j$ after the previous two steps, but the assignment gets altered due to random rebalancing.
Generally, if the mapping between a model update and a cluster has changed, then the model update should be  in a large cluster $S_j, j \in [j_l]$.
Assume that there are $h$ updates in a large cluster that are needed to merge into small clusters, we have
\begin{equation}
    \Pr(O_i^{j, j'}) \leq \Pr(|S_j|>B)\cdot \frac{h}{\sum_{j\in j_l}|S_j|}.
\end{equation}
For large clusters, considering the average cluster scale is $M/k$ by definition, and the variance of cluster scale is bounded by $\mu^2$. With one-sided Chebyshev's inequality, we can have 
\begin{equation}
    \Pr(|S_j| > B) \leq \left(\frac{\mu}{M/k-B}\right)^2.
\end{equation}
Combined with the facts that $h < M$ and $\sum_{j\in j_l}|S_j| > M/k$, it holds that
\begin{equation}
\Pr(O_i^{j, j'}) \leq \left(\frac{\mu}{M/k-B}\right)^2 \cdot \frac{M}{M/k} = \frac{k\mu^2}{(M/k-B)^2}.
\end{equation}

If the final cluster assignment is incorrect, then the error must have occurred in at least one of the three steps. Hence, we can bound the error clustering rate $\tau$ as
\begin{equation}
\begin{aligned}
    \tau_i^{j,j'} \leq \Pr(E_i^{j,j'})+\Pr(H_i^{j,j'})+\Pr(O_i^{j,j'}) 
\leq \frac{8\eta^2}{\beta^2 \lambda^2 \Delta^4} + \frac{\sigma_s}{\sqrt{\pi}}\exp(-1/4\sigma_s^2) + \frac{k\mu^2}{(M/k-B)^2}.
\end{aligned}
\end{equation}

Note that $\tau_i^{j,j'}$ denotes the probability of client update $i$ being clustered into any single cluster $j'$. We can get the probability of any client update $i$ been wrongly clustered into any other cluster as 
\begin{equation}
    \tau = \bigcup_{j'\in[k]}\{\tau_i^{j,j'}\} \leq k\cdot \tau_i^{j,j'} 
     \leq \frac{8\eta^2k}{\beta^2 \lambda^2 \Delta^4} + \frac{\sigma_sk}{\sqrt{\pi}}\exp(-1/4\sigma_s^2) + \frac{k^2\mu^2}{(M/k-B)^2}.
\end{equation}
\end{proof}

\subsection{Proof of Theorem~\ref{thm:one_step}}
\label{app:conv}

\begin{proof}
Suppose that at a certain communication round, we have that $\|\theta_j-\theta_j^*\| \leq (\frac{1}{2}-\beta)\sqrt{\frac{\lambda}{L}}\Delta$, for all $j \in [k]$. With out loss of generality, we focus on the update of cluster $1$. Based on the updating rule, we have 
\begin{equation}
\left\|\theta^+_1-\theta^*_1\right\|= \left\|\theta_1-\theta^*_1-\frac{\gamma}{|S_1|} \left(\sum_{i\in S_1}\nabla F_i(\theta_1)+\mathcal{N}(0,(2C_\theta)^2\sigma_\theta^2I_{d})\right) \right\|
\end{equation}
where $F_i(\theta) = F(\theta, D_i)$ with $D_i$ being the set of data from the $i$-th client used to compute the gradient. 
For cluster $j$, we denote its ground-truth client set as $S_j^*$, and the complement set as $\overline{S_j^*}$. Since  $ S_1 = (S_1\cap S_1^*)\cup(S_1\cap\overline{S_1^*})\ $,
we have
\begin{equation}
\|\theta^+_1-\theta^*_1\|
=\|\underbrace{\theta_1-\theta^*_1-\frac{\gamma}{|S_1|}\sum_{i\in S_1\cap S^*}\nabla F_i(\theta_1)}_{T_1}
-\underbrace{\frac{\gamma}{|S_1|}\sum_{i\in S_1\cap \overline{S_i^*}}\nabla F_i(\theta_1)}_{T2}
-\underbrace{\frac{\gamma}{|S_1|}\mathcal{N}(0,(2\sigma_\theta C_\theta)^2 I_{d})}_{T3} \|. 
\end{equation}
Using triangle inequality, we obtain
$\|\theta^+_1-\theta^*_1\|\leq\|T_1\|+\|T_2\|+\|T_3\|$.

\noindent
\textbf{Bound $\|T_1\|$}: We split $T_1$ as follows:
\begin{equation}
T_1=\underbrace{\theta_1-\theta_1^*-\widehat{\gamma} \nabla F^1\left(\theta_1\right)}_{T_{11}}+\widehat{\gamma}(\underbrace{\nabla F^1\left(\theta_1\right)- \frac{1}{|S_1\cap S^*_1|}\sum_{i \in S_1 \cap S_1^*} \nabla F_i\left(\theta_1\right)}_{T_{12}}),
\end{equation}
where $\widehat{\gamma}:=\frac{\gamma|S_1\cap S^*_1|}{|S_1|}$. Based on $\lambda$-strongly convexity and $L$-smoothness of $F^1$, we can bound $\|T_{11}\|$ as 
\begin{equation}
\|T_{11}\| = \|\theta_1-\theta_1^*-\widehat{\gamma} \nabla F^1\left(\theta_1\right)\| \leq \left(1-\frac{\widehat{\gamma}\lambda L}{\lambda + L}\right) \|\theta_1 - \theta_1^*\|
\end{equation}
when $\widehat{\gamma}<\frac{1}{L}$.
For $T_{12}$, we have $\mathbb{E}[\|T_{12}\|^2]= \frac{v^2}{|S_1\cap S_1^*|}$, which means $E[\|T_{12}\|] \leq \frac{v}{\sqrt{|S^1 \cap S_1^*|}}$. We can then bound the $\|T_{12}\|$ by Markov's inequality as: for any $\delta_1 \in (0, 1]$, we have with probability at least $1-\delta_1$,
\begin{equation}
\|T_{12}\|\leq\frac{v}{\delta_1\sqrt{|S_1 \cap S^*_1|}}.
\end{equation}
Taking into accountant $T_{11}$, $T_{12}$ and fact that $\lambda < L$, we have with probability at least $1-\delta_1$,
\begin{equation}
\|T_1\| \leq \left(1-\frac{\gamma L|S_1\cap S_1^*|}{2|S_1|}\right)\|\theta_1 - \theta^*_1\| + \frac{v}{\delta_1 \sqrt{|S_1 \cap S_1^*|}}.
\end{equation}

\noindent
\textbf{Bound $\|T_2\|$}: We propose to split the $(S_1 \cap \overline{S_1^*})$ into $\bigcup_{j\neq1, j\in[k]}(S_1\cap S^*_j)$. Without loss of generality, we first analyze the $(S_1 \cap S_j^*)$ as \\
\begin{equation}
T_{2j} = |S_1\cap S^*_j|\underbrace{\nabla F^j(\theta_1)}_{T_{21j}}+\underbrace{\sum_{i\in S_i \cap S_j^*}(\nabla F_i(\theta_1) - \nabla F^j(\theta_1))}_{T_{22j}}, 
\end{equation}
where $T_2 = \frac{\gamma}{|S_1|}\sum_{j\in [k]}T_{2j}$
About $T_{21j}$, we have
\begin{equation}
\begin{aligned}
\|T_{21j}\| &= \|\nabla F^j(\theta_1) - \nabla F^j(\theta_j^*)\| \leq L\|\theta_1-\theta_j^*\| \\
&\leq L\|\theta_1-\theta_1^*\| + L\|\theta_1^*-\theta_j^*\| \leq \frac{3}{2}L\Delta.
\end{aligned}
\end{equation}
Further, we have
$
\mathbb{E}\left[\left\|T_{22j}\right\|^2\right]=\left|S_1 \cap S_j^*\right| {v^2}, 
$
which implies 
$
\mathbb{E}\left[\left\|T_{22j}\right\|\right]\leq \sqrt{\left|S_1 \cap S_j^*\right|} {v}.
$
And according to Markov's inequality, for any $\delta_2\in[0, 1]$, with probability at least $1-\delta_2$, we have
\begin{equation}
\|T_{22j}\| = \left\|\sum_{i \in S_1 \cap S_j^*} \nabla F_i\left(\theta_1\right)-\nabla F^j\left(\theta_1\right)\right\|\leq \frac{\sqrt{\left|S_1 \cap S_j^*\right|} {v}}{\delta_2}.
\end{equation}
Considering there are $k$ clusters, we have with probability at least $(1-k\delta_2)$,  
\begin{equation}
\|T_2\|\leq \frac{3\gamma L\Delta}{2|S_1|}|S_1\cap \overline{S_1^*}|+\frac{\gamma v\sqrt{k}}{\delta_2|S_1|}\sqrt{|S_1\cap \overline{S_1^*}|}.
\end{equation}

\textbf{Bound $\|T_3\|$}: We give the norm bound of the gaussian noise from DP as follows.
$\|T_3\| = \|\frac{\gamma}{|S_1|}\mathcal{N}(0,(2\sigma_\theta C_{\theta})^2I_{d})\|$, which is a vector with dimension of model parameters $d$ and scale of DP noise multiplier. First, we have
\begin{equation}
\begin{aligned}
\|T_3\| &= \bigl\|\frac{\gamma}{|S_1|}\mathcal{N}(0,(2\sigma_\theta C_\theta)^2 I_{d})\bigr\| \leq \frac{2\gamma\sigma_\theta C_\theta}{B} \,\|\mathcal{N}(0, I_{d})\|.
\end{aligned}
\end{equation}

Denote $X \sim \mathcal{N}(0, I_{d})$. Then $\|X\|^2$ follows a $\chi^2$-distribution with $d$ degrees of freedom, i.e. $\|X\|^2 \sim \chi^2_{d}$. 
We then use the upper bound for a \(\chi^2\)-tail to bound $\|T_3\|$ as
\begin{equation}
   P \left(\|X\|^2 \leq d 
                 + 2\sqrt{d\ln(\tfrac{1}{\delta_3})} 
                 + 2\ln(\tfrac{1}{\delta_3})\right)
   \geq 1-\delta_3.
\end{equation}
Multiplying both sides by $(\sigma_\theta/B)$ and taking the square root yields
\begin{equation}
   P\left(\|T_3\| \leq \frac{2\gamma\sigma_\theta C_\theta}{B} 
         \sqrt{d 
                 + 2\sqrt{d\ln(\tfrac{1}{\delta_3})} 
                 + 2\ln(\tfrac{1}{\delta_3})}\right)
   \geq 1-\delta_3.
\end{equation}
Which means that we have probability at least $1-\delta_3$ that 
\begin{equation}
    \|T_3\| \leq \frac{2\gamma\sigma_\theta C_\theta}{B} 
         \sqrt{d 
                 + 2\sqrt{d\ln(\tfrac{1}{\delta_3})} 
                 + 2\ln(\tfrac{1}{\delta_3})}.
\end{equation}

With bounded $\|T_1\|$ and $\|T_2\|$ above, we have probability at least $1-(\delta_1+k\delta_2+\delta_3)$ 
that
\begin{equation}
\begin{aligned}
\|\theta^+_1-\theta^*_1\|\leq& 
\left(1-\frac{\gamma L|S_1\cap S_1^*|}{2|S_1|}\right)\|\theta_1 - \theta^*_1\| \\
+& \frac{v}{\delta_1 \sqrt{|S_1 \cap S_1^*|}}
+  \frac{3\gamma L\Delta}{2|S_1|}|S_1\cap \overline{S_1^*}| \\
+& \frac{\gamma v\sqrt{k}}{\delta_2|S_1|}\sqrt{|S_1\cap \overline{S_1^*}|} 
+ \frac{2\gamma\sigma_\theta C_\theta}{B}\sqrt{d + 2\sqrt{d\ln(\tfrac{1}{\delta_3})} + 2\ln(\tfrac{1}{\delta_3})}.
\end{aligned}
\end{equation}
We bound $|S_1|$, $|S_1\cap S_1^*|$ and $|S_1\cap \overline{S_1^*}|$ next to complete the theorem. Based on the updating rules of our method, we note that the maximum cardinality of any $S_j$ is bounded by $\rho := M-(k-1)B$, thus we have $|S_1|\leq \rho$. Also, after rebalancing, we have $|S_1|\geq B$.
Given the error rate in clustering from Lemma~\ref{lemma:mis_cls}, we have $ \mathbb{E}[|S_1\cap \overline{S_1^*|}] \leq \tau M $, plus the Markov's inequality, we obtain probability at least $1-\delta_4$ that $ |S_1\cap \overline{S_1^*|}\leq \tau M/\delta_4$, meanwhile, we have $|S_1\cap S_1^*| = |S_1|-(|S_1\cap \overline{S_1^*}|) \geq (B-\frac{\tau M}{\delta_4})$.
Let $\delta_c \geq \delta_1+k\delta_2+\delta_3+\delta_4$ and choose $\delta_1=\delta_c/4, \delta_2=\delta_c/4k, \delta_3 = \delta_c/4, \delta_4 = \delta_c/4 $, then the failure probability can be bounded by $\delta_c$. 
Finally, we have with probability at least $(1-\delta_c)$, it holds that
\begin{equation}
\begin{aligned}
\|\theta^+_1-\theta^*_1\|\leq& \left(1-\frac{\gamma L (B-\frac{2\tau M}{\delta_c})}{2\rho}\right)\|\theta_1 - \theta^*_1\| \\
+&\frac{4 v}{\sqrt{\delta_c(B\delta_c-4\tau M)}}
+\frac{6 \tau \gamma L\Delta M}{\delta_c B}
+\frac{8\gamma v k\sqrt{\tau k M}}{B\delta_c\sqrt{\delta_c}}
+\frac{2\gamma\sigma_\theta C_\theta}{B}\sqrt{d + 2\sqrt{d\ln(\tfrac{4}{\delta_c})} + 2\ln(\tfrac{4}{\delta_c})},
\end{aligned}
\label{eq_26}
\end{equation}
where 
\begin{equation}
\begin{aligned}
\rho&=(M-(k-1)B)\text{, and }
\tau = \frac{8\eta^2k}{\beta^2 \lambda^2 \Delta^4} + \frac{\sigma_sk}{\sqrt{\pi}}\exp(-1/4\sigma_s^2) + \frac{k^2\mu^2}{(M/k-B)^2}.
\end{aligned}
\end{equation}
\end{proof}

\subsection{Overall Convergence of \randname over $T$ Rounds}
\label{app:corollary}
\begin{corollary}
\label{conv_cor}
    Suppose Assumption~\ref{ass:convex_smooth}-\ref{assum:seperation} hold, after $T=\frac{1}{K}\log\left(\frac{\Delta\sqrt{\lambda}}{2\epsilon_T\sqrt{L}}\right) + \frac{\log\left(\frac{\Delta\sqrt{\lambda}}{8(\frac{1}{2}-\alpha_0)\sqrt{L}}\right)}{\log(1-K)} $ rounds, for all cluster $j \in [k]$, we have at least probability $(1-\delta_c')$, $\|\theta_j^T - \theta_j^*\| \leq \epsilon_T$, where
    $$
        \epsilon_T = \frac{2\epsilon}{K}, ~
        K = \frac{\gamma L (B-2\tau M / \delta_c)}{2\rho}, ~\delta_c' = kT\delta_c. 
    $$
\end{corollary}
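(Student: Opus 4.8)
The plan is a standard contraction-plus-induction argument built on the one-step bound of Theorem~\ref{thm:one_step}, together with a union bound over rounds and clusters for the probabilistic bookkeeping. Write $a_j^t := \|\theta_j^t - \theta_j^*\|$ and $K := \tfrac{\gamma L(B - 2\tau M/\delta_c)}{2\rho}$, so that whenever Theorem~\ref{thm:one_step} is applicable at round $t$ it gives, for a fixed $j$ and with probability at least $1-\delta_c$, the affine recursion $a_j^{t+1} \le (1-K)\,a_j^t + \epsilon$. The first thing to pin down is that $K \in (0,1)$ and that the fixed point $\epsilon/K$ of this recursion lies strictly inside the basin $\{a : a < (\tfrac12-\beta)\sqrt{\lambda/L}\,\Delta\}$ that Theorem~\ref{thm:one_step} requires in its hypothesis; this is precisely what the quantitative conditions on $\Delta$, $M$, $B$, $\rho$ in Assumption~\ref{assum:seperation} are engineered to deliver, since $\epsilon$ itself carries $\tau$ (hence a $\Delta^{-4}$ dependence) and the condition $M > B\Delta/(2\tau) - \rho/(\tau\gamma L)$ together with the lower bound on $\Delta$ forces $\epsilon/K$ below the basin radius, while the $\alpha_0$-versus-$\beta$ gap at initialization ($a_j^0 \le (\tfrac12-\alpha_0)\sqrt{\lambda/L}\,\Delta$) supplies the remaining slack.

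Next I would run the induction on the event $\mathcal{G}_t$ that the conclusion of Theorem~\ref{thm:one_step} held at every round $s < t$ for every cluster $j\in[k]$. On $\mathcal{G}_t$, unrolling the affine recursion from the initialization gives $a_j^t \le (1-K)^t a_j^0 + \tfrac{\epsilon}{K}\bigl(1-(1-K)^t\bigr) \le (1-K)^t a_j^0 + \tfrac{\epsilon}{K}$, and by the previous paragraph each $a_j^t$ then stays below the basin radius, so Theorem~\ref{thm:one_step} is legitimately applicable at round $t$ to every cluster; hence $\Pr(\mathcal{G}_{t+1}\mid\mathcal{G}_t) \ge 1 - k\delta_c$, and telescoping yields $\Pr(\mathcal{G}_T) \ge 1 - kT\delta_c = 1 - \delta_c'$. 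This also requires keeping the internal failure-probability split of Theorem~\ref{thm:one_step} (the $\delta_1,\dots,\delta_4$ allocation) consistent with the single per-round budget $\delta_c$, which is where some care is needed.

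Finally, on $\mathcal{G}_T$ it suffices to choose $T$ so that $(1-K)^T a_j^0 \le \tfrac{\epsilon}{K} = \tfrac{\epsilon_T}{2}$, since then $a_j^T \le \tfrac{\epsilon_T}{2} + \tfrac{\epsilon}{K} = \epsilon_T$. Plugging in $a_j^0 \le (\tfrac12-\alpha_0)\sqrt{\lambda/L}\,\Delta$ and solving the scalar inequality $(1-K)^T (\tfrac12-\alpha_0)\sqrt{\lambda/L}\,\Delta \le \tfrac{\epsilon_T}{2}$ gives $T \ge \log\!\bigl(\tfrac{(1-2\alpha_0)\sqrt{\lambda}\,\Delta}{\epsilon_T\sqrt{L}}\bigr)\big/\log\tfrac{1}{1-K}$; using $\log\tfrac{1}{1-K}\ge K$ on the factor that needs it and splitting the logarithm of the initialization-to-target ratio into the two displayed pieces then reproduces the stated round count (matching the exact constants in the two logarithmic terms is routine bookkeeping). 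I expect the genuine obstacle to be the first step — certifying that the contraction region is self-consistent, i.e.\ that $\epsilon/K$ (with its $v$-, $\tau$-, and $\sigma_\theta$-dependent terms) is small enough that the iterates never leave the basin where Theorem~\ref{thm:one_step} applies; this is exactly where every inequality bundled into Assumption~\ref{assum:seperation} gets consumed. The remaining work — unrolling a scalar affine recursion, a union bound, and taking logarithms — is routine.
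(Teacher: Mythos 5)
Your proposal is correct and follows essentially the same route as the paper's proof: both reduce Theorem~\ref{thm:one_step} to the affine recursion $a^{t+1}\le(1-K)a^t+\epsilon$, invoke Assumption~\ref{assum:seperation} to guarantee the iterates remain in the basin where the one-step bound applies (the paper phrases this as $\epsilon \le K(\tfrac12-\alpha_0)\Delta$ ensuring contraction), accumulate the per-round, per-cluster failure probabilities into $1-kT\delta_c=1-\delta_c'$, and solve the resulting geometric decay for $T$ with the limit $2\epsilon/K=\epsilon_T$. The only cosmetic difference is that the paper splits $T=T'+T''$ into two explicit phases whereas you unroll the recursion in one pass and then split the logarithm, which is the same bookkeeping.
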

\begin{proof}
    Recall that the error floor of Theorem \ref{thm:one_step} $\epsilon$ is 
\begin{equation}
\epsilon = \frac{4 v}{\sqrt{\delta_c(B\delta_c-4\tau M)}}
+\frac{6 \tau \gamma L\Delta M}{\delta_c B}
+\frac{8\gamma v k\sqrt{\tau k M}}{B\delta_c\sqrt{\delta_c}}
+\frac{2\gamma\sigma_\theta C_\theta}{B}\sqrt{d + 2\sqrt{d\ln(\tfrac{4}{\delta_c})} + 2\ln(\tfrac{4}{\delta_c})}.
\end{equation}
 Let $K$ be the subtraction factor $\frac{\gamma L (B-2\tau M / \delta_c)}{2\rho}$ in Eq.~\eqref{eq_26}. We rewrite Eq.~\eqref{eq_26} as 
\begin{equation}
    \|\theta^+_1-\theta^*_1\|\leq \left(1-K\right)\|\theta_1 - \theta^*_1\| + \epsilon = \|\theta_1 - \theta^*_1\| + (\epsilon - K\|\theta_1 - \theta^*_1\|). 
\end{equation}
Given the bound of $\Delta$ in Assumption~\ref{assum:seperation}, we can ensure $\epsilon \leq K(\frac{1}{2}-\alpha_0)\Delta$, thus we have that the proposed method is contractive, that is, $\|\theta^{t+1}_1-\theta^*_1\| \leq |\theta^{t}_1-\theta^*_1\|$.

In the $t$-th round, assume we have
$ \|\theta^{t}_1-\theta^*_1\| \leq (\frac{1}{2}-\alpha_t)\sqrt{\frac{\lambda}{L}}\Delta, $ where the sequence of $\alpha_t$ should be non-decreasing due to its contractive convergence.

Suppose after $T'$ rounds we have $\alpha_t \leq \frac{1}{4}$, this can be achieved if both 
\begin{equation}
\label{eq:t_prime}
(1-K)^{T'}(\frac{1}{2}-\alpha_0)\Delta\leq \frac{1}{8}\sqrt{\frac{\lambda}{L}}\Delta~\text{, and}~\frac{1}{K}\epsilon \leq \frac{1}{8}\sqrt{\frac{\lambda}{L}}\Delta
\end{equation} 
hold, where the latter equation holds given bounds on $\Delta$ in Assumption \ref{assum:seperation}. 
Solving Eq.~\eqref{eq:t_prime}, we can get
\begin{equation}
    T' \geq \frac{\log\left(\frac{\Delta\sqrt{\lambda}}{8(\frac{1}{2}-\alpha_0)\sqrt{L}}\right)}{\log(1-K)},
\end{equation}
which means that after $T'$ rounds, $\|\theta^{T'}_1-\theta^*_1\| \leq \frac{1}{4}\sqrt{\frac{\lambda}{L}}\Delta$.

Let $T = T' + T''$, after $T$ rounds in total (with another $T''$ rounds), we have probability at least $1-kT\delta_c$, for all $j\in[k]$
\begin{equation}
    \|\theta^{T}_1-\theta^*_1\| \leq (1-K)^{T''}|\theta^{T'}_1-\theta^*_1\| + \frac{1}{K}\epsilon, 
\end{equation}
where the last term $\frac{\epsilon}{K}$ is calculated by the sum of series of $\epsilon + (1-K)\epsilon + (1-K)^2\epsilon + \cdots$. 

Let $T'' \geq \frac{1}{K}\log\left(\frac{k\Delta\sqrt{\lambda}}{4\epsilon\sqrt{L}}\right)$, we can bound $(1-K)^{T''}\|\theta^{T'}_1-\theta^*_1\|$ as
\begin{equation}
    (1-K)^{T''}\|\theta^{T'}_1-\theta^*_1\| \leq e^{-KT''}\cdot\frac{1}{4}\sqrt{\frac{\lambda}{L}}\Delta \leq \frac{1}{K}\epsilon, 
\end{equation}
which means $\|\theta^{T}_1-\theta^*_1\| \leq \frac{2\epsilon}{K} = \epsilon_T$
after $T = 
\frac{1}{K}\log\left(\frac{K\Delta\sqrt{\lambda}}{4\epsilon\sqrt{L}}\right)
+
\frac{\log\left(\frac{\Delta\sqrt{\lambda}}{8(\frac{1}{2}-\alpha_0)\sqrt{L}}\right)}{\log(1-K)} 
 = \frac{1}{K}\log\left(\frac{\Delta\sqrt{\lambda}}{2\epsilon_T\sqrt{L}}\right)
+
\frac{\log\left(\frac{\Delta\sqrt{\lambda}}{8(\frac{1}{2}-\alpha_0)\sqrt{L}}\right)}{\log(1-K)} 
$ 
rounds, with probability at least $(1-\delta_c')$ where $\delta_c' = kT\delta_c$.

\end{proof}

\section{Experimental Details} \label{app:exp_details}

\paragraph{Models.} For both image classification tasks, we employed a CNN network, and for the Shakespeare classification task, we used a LSTM network for classification. For the synthetic data, we used a linear regression model with slope and interscet as trainable parameters.

\paragraph{Hyperparameters.}
\label{app:hyperparam}
For the clipping bound of model updates $C_\theta$, we conduct grid search in \texttt{np.logspace(-1,-3,5)} and select the value based on model accuracy on validation set for all methods. 
For our hyperparameter $B$ (minimal cluster size), if not specified, we use grid search $B$ in $\{4,8,12\}$ by default in all experiments.
For the clipping threshold of cluster identifiers $s$, considering that all $s$ are one-hot vectors, the clipping threshold cannot affect cluster assignment. Since $\widetilde{s} = s/\max(1, \frac{s}{C_s}) + \mathcal{N}(0,C_s^2) = C_s\cdot(s+\mathcal{N}(0,1))$ if we select $C_s \leq 1$, and that server assign clusters based on the position of $\max\{\widetilde{s_j}\}_{j\in k}$, so it's only the position of $\max\{\widetilde{s_j}\}_{j\in k}$ that matters, which is irrelevant to $C_s$. In experiments, we use $0.1$ as the $C_s$. 
We tune local client-side learning rate from \{1e-6, 1e-5, 1e-4, 1e-3\}, and the number of local epochs from  \{5, 10, 15, 20\} on DP-FedAvg~\cite{iclr18_dpfedavg} and use the same values for all methods.

\paragraph{Hardware.}
Most experiments are conducted using 2 NVIDIA L40 GPU on a AMD Thread-Ripper HEDT platform in a desktop.

\section{Additional Experimental Results}


\subsection{Experiments on CIFAR10 and CIFAR100}
\label{app:cifar}
We further evaluate our method \randname on the CIFAR10 and CIFAR100 datasets \citep{cifar}. For both datasets, we partition the dataset in the same method as "Balanced Clusters" setup for FashionMNIST described in Section~\ref{sec:fashionmnist_text}. We compare \randname with various DP federated learning and federated clustering baselines. The experiments are conducted with privacy budgets $\varepsilon=4$ and $\varepsilon=8$.
We present classification accuracies in Table~\ref{table:acc_cifar}. As shown, RR-Cluster consistently outperforms the baseline methods on both CIFAR10 and CIFAR100 datasets.

\begin{table}[h]
\centering
\vspace{5pt}
\renewcommand\arraystretch{1.1}
\begin{tabular}{l|cc|cc}
\thickhline
& \multicolumn{2}{c|}{CIFAR10} & \multicolumn{2}{c}{CIFAR100} \\
\multirow{-2}{*}{Methods~~~} & \textcolor{mygreen}{$\varepsilon=4$} & \textcolor{mygreen}{$\varepsilon=8$} & \textcolor{mygreen}{$\varepsilon=4$} & \textcolor{mygreen}{$\varepsilon=8$} \\
\hline
DP-FedAvg & 52.89 & 54.48 & 14.05 & 14.49 \\
DP-FedProx & 52.04 & 54.61 & 14.48 & 14.53 \\
DP-FeSEM & 53.01 & 55.87 & 16.23 & 17.08 \\
DP-FedCAM & 52.50 & 54.73 & 16.36 & 17.04 \\
DP-IFCA & 53.91 & 56.77 & 16.29 & 17.47 \\
\hline
\randname~(IFCA) & \textbf{57.46} & \textbf{58.41} & \textbf{17.83} & \textbf{18.00} \\
\thickhline
\end{tabular}
\vspace{0.5em}
\caption{\small Comparison with baselines on CIFAR10 and CIFAR100 datasets. \randname achieves the highest accuracies.}
\label{table:acc_cifar}
\end{table}

\subsection{Full results with Federated Personalization Methods}
\label{app:tab_with_per}

\begin{table*}[!h]
\centering
    \resizebox{0.98\linewidth}{!}{
    \renewcommand\arraystretch{1.1}
        \begin{tabular}{l|ccc|ccc}
        \thickhline
		& \multicolumn{3}{c|}{Balanced Clusters}&\multicolumn{3}{c}{Imbalanced Clusters} \\
      	\multirow{-2}{*}{Methods~~~} & \textcolor{mygreen}{$\varepsilon=2$}  & \textcolor{mygreen}{$\varepsilon=4$} & \textcolor{mygreen}{$\varepsilon=8$}  & \textcolor{mygreen}{$\varepsilon=2$}  & \textcolor{mygreen}{$\varepsilon=4$} & \textcolor{mygreen}{$\varepsilon=8$}   \\
    \hline
    DP-FedAvg & 60.88 & 61.50 & 62.72 & 60.80 & 61.53 & 61.59 \\
    \hline
    DP-FedPer & 61.03 & 61.78 & 62.19 & 60.42 & 61.83 & 62.35 \\
    
    \hline
    DP-IFCA & 62.68 & 64.46 & 65.35 & 56.12 & 58.05 & 59.63 \\
    DP-FeSEM & 63.16 & 64.68 & 64.76 & 58.55 & 59.82 & 60.10 \\
    DP-FedCAM & 49.70 & 56.86 & 61.54 & 43.98 & 47.51 & 50.35 \\

    \hline
   \randname (IFCA) &\reshl{65.69}{3.01}&\reshl{66.63}{2.17}&\reshl{67.51}{2.16}& \reshl{61.31}{5.19}&\reshl{61.99}{3.94}&\reshl{63.77}{4.14}\\
    \randname (FeSEM) &\reshl{63.89}{0.73}&\reshl{64.80}{0.12}&\reshl{64.86}{0.10}& \reshl{58.70}{0.15}&\reshl{60.34}{0.52}&\reshl{61.16}{1.06}\\
    \randname (FedCAM) &\reshl{53.91}{4.21}&\reshl{58.65}{1.79}&\reshl{64.78}{3.24}& \reshl{46.12}{2.14}&\reshl{47.95}{0.44}&\reshl{50.78}{0.43}\\
    \hline 
    \end{tabular}}
    \vspace{0.5em}
    \caption{\small Full version of Table~\ref{table:acc_fashionmnist}. Comparison with baselines on {FashionMNIST}. See discussions in Section~\ref{sec:exp_sota}.}
    \vspace{1em}
    \label{app:acc_fashionmnist}
\end{table*}

\begin{table*}[!h]
\centering
    \resizebox{0.98\linewidth}{!}{
		\renewcommand\arraystretch{1.1}
        \begin{tabular}{l|ccc|ccc}
		\thickhline
		& \multicolumn{3}{c|}{Mild Client Heterogeneity}&\multicolumn{3}{c}{High Client Heterogeneity} \\
        \multirow{-2}{*}{Methods~~~} & \textcolor{darkgreen}{$\varepsilon=2$}  & \textcolor{darkgreen}{$\varepsilon=4$}  & \textcolor{darkgreen}{$\varepsilon=8$}  & \textcolor{darkgreen}{$\varepsilon=2$}  & \textcolor{darkgreen}{$\varepsilon=4$}  & \textcolor{darkgreen}{$\varepsilon=8$}      \\
    \hline
    DP-FedAvg & 26.47 & 30.95 & 32.61 & 23.82 & 24.46 & 25.54 \\
    \hline
    DP-FedPer & 40.67 & 40.81 & 41.23 & 24.78 & 25.51 & 26.12 \\
    \hline
    DP-IFCA & 62.23 & 65.39 & 65.93 & 32.53 & 35.60 & 36.19 \\
    DP-FeSEM & 60.43 & 65.72 & 66.24 & 31.12 & 37.52 & 38.74 \\
    DP-FedCAM & 57.35 & 57.22 & 61.26 & 23.66 & 24.55 & 29.70 \\
    \hline
    \randname (IFCA) & \reshl{66.38}{4.15} & \reshl{66.75}{1.03} & \reshl{67.04}{0.80} & \reshl{37.13}{4.60} & \reshl{38.92}{1.40} & \reshl{39.05}{0.31} \\
    \hline
    \end{tabular}}
    \vspace{0.5em}
    \caption{\small Full version of Table~\ref{table:acc_emnist}. Comparison with baselines on {EMNIST}. See discussions in Section~\ref{sec:exp_sota}.}
    \vspace{1em}
    \label{app:acc_emnist}
\end{table*}

\begin{table}[!h]
  \centering
    \resizebox{0.5\textwidth}{!}{
		\renewcommand\arraystretch{1.1}
        \begin{tabular}{l|cccc}
		\thickhline
   {Methods~~~} & \textcolor{darkgreen}{$\varepsilon=4$}  & \textcolor{darkgreen}{$\varepsilon=8$} & \textcolor{darkgreen}{$\varepsilon=16$}  \\
    \thickhline
    DP-FedAvg & 04.47 & 13.43 & 17.53 \\
    \hline
    DP-FedPer & 13.09 & 13.14 & 13.18 \\
    \hline
    DP-IFCA & 12.62 & 12.64 & 13.20 \\
    DP-FeSEM & 10.97 & 12.99 & 15.89 \\
    DP-FedCAM & - & - & 04.47 \\
    \hline
    \texttt{RR-Cluster(IFCA)} &{13.42}&{13.83}&{16.25}\\
    \hline
    \end{tabular}}
    \vspace{0.5em}
\caption{\small Full version of Table~\ref{table:acc_shakespeare}. Comparison with baselines on {Shakespeare}. See discussions in Section~\ref{sec:exp_sota}.}
\label{app:acc_shakespeare}
\end{table}

In this section, we present the full versions of the experimental results tables 
discussed in Section~\ref{sec:exp_sota} of the main paper. For detailed analysis and discussion of these findings, please refer to Section~\ref{sec:exp_sota}.

\subsection{Experiments on Synthetic data}
\label{app:side_effect}

To further demonstrate that \randname has the side effect of mitigating model collapse in non-private settings, we conduct experiments on a synthetic dataset. 
Each synthetic dataset pair $({x}, \hat{y})$ with $x, \hat{y} \in \mathbb{R}$ is generated by $\hat{y} = k{x} + b + \epsilon$, where $\epsilon \sim \mathcal{N}(0, \sigma^2)$. The trainable parameters are $k, b \in \mathbb{R}$, and we use the MSE loss. We create four scenarios based on whether the clusters are balanced, and whether there is a clear separation between clusters.

When the truth number of clusters is large (first row), we see that  IFCA  suffers from model collapse even if we set $k=4$. In contrast, our \randname efficiently trained all four cluster models. We also note that in the other scenario where the clusters are imbalanced (so that it is more likely to dynamically rebalance the size of clusters) (second row), our method can introduce some additional bias compared with IFCA.
Further, we extend the original data distribution (the `Easy to Cluster') to `Hard to Cluster' settings by a increased Gaussian variance on client data generation. We show that our \randname works well with handling data that harder to cluster, and that our methods also demonstrate strong robustness against model collapse, since we have four cluster model fully aligned with the object data, while IFCA have one cluster model left at the bottom and remained untrained.


\vspace{1em}
\begin{table*}[!h]
\setlength{\intextsep}{0pt}
\setlength{\textfloatsep}{0pt}
\setlength{\tabcolsep}{0.5pt}
\centering
\begin{tabular}{cccc}
        Data & FedAvg & IFCA & \randname (IFCA) \\


\makecell[c]{Balance + \\ Easy to cluster} & 
\begin{tabular}{c}
\includegraphics[width=0.27\textwidth]{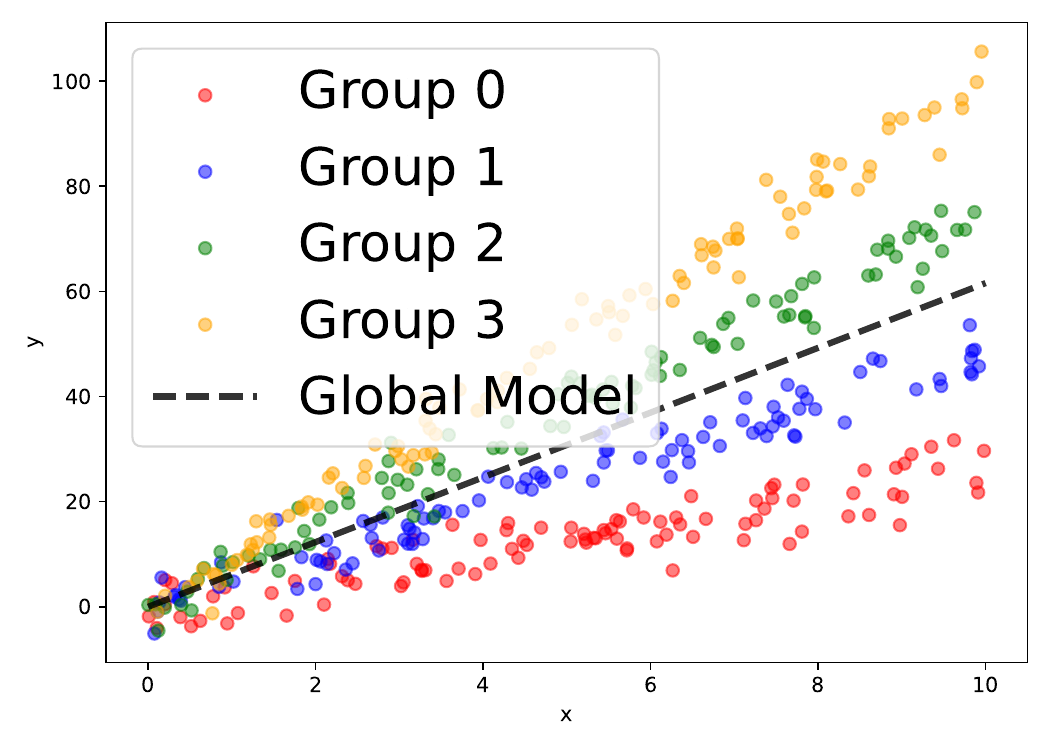}
\end{tabular} &

\begin{tabular}{c}
\includegraphics[width=0.27\textwidth]{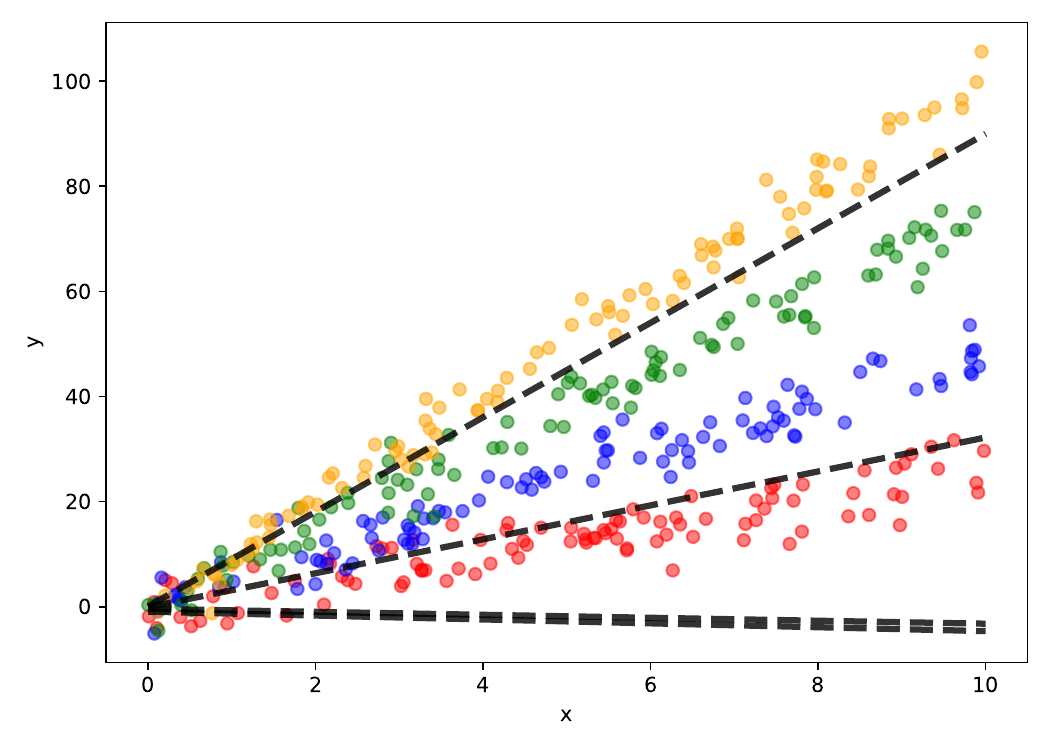}
\end{tabular} &

\begin{tabular}{c}
\includegraphics[width=0.27\textwidth]{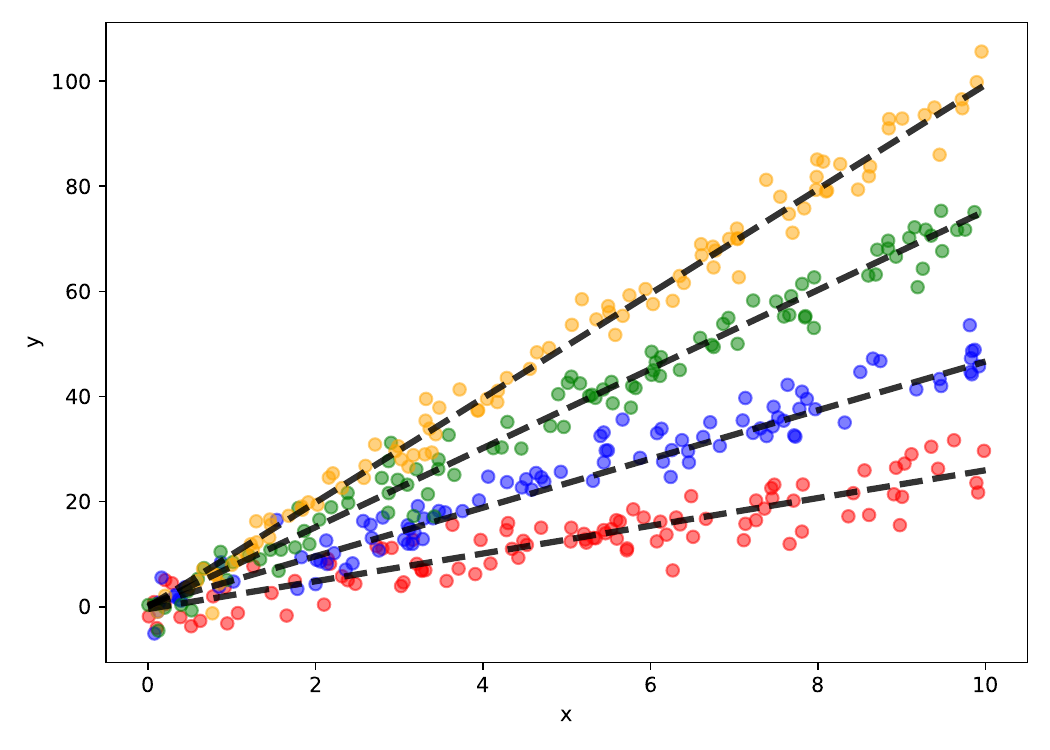}
\end{tabular}
\\

\makecell[c]{Balance + \\ Hard to cluster} & 
\begin{tabular}{c}
\includegraphics[width=0.27\textwidth]{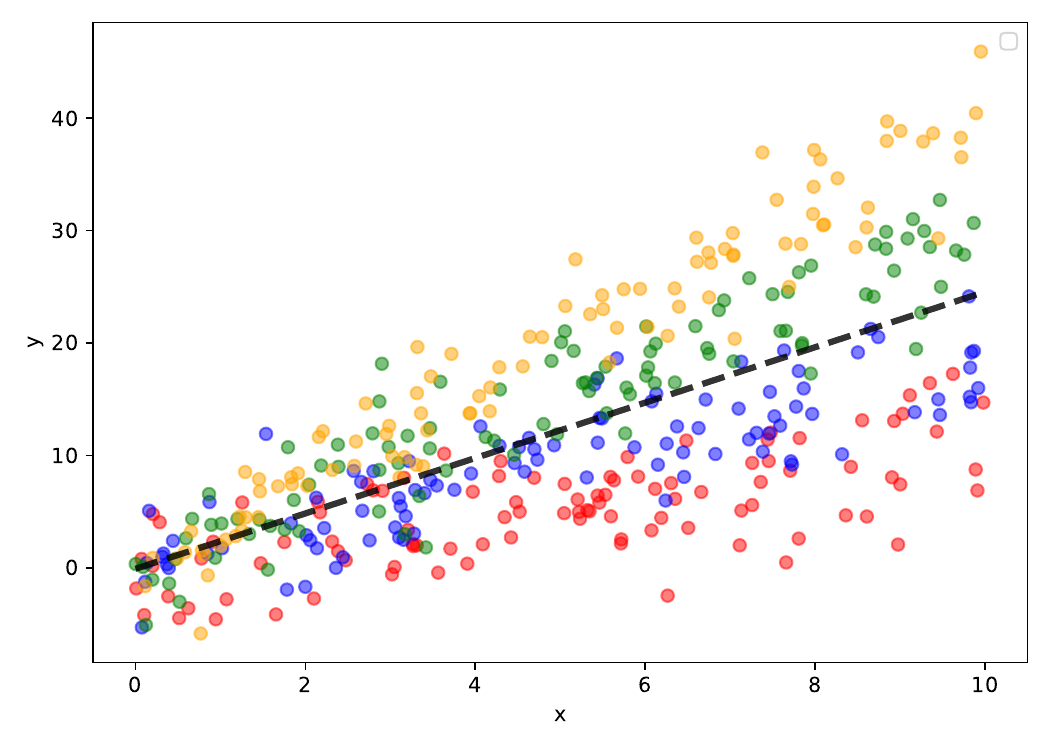}
\end{tabular} &

\begin{tabular}{c}
\includegraphics[width=0.27\textwidth]{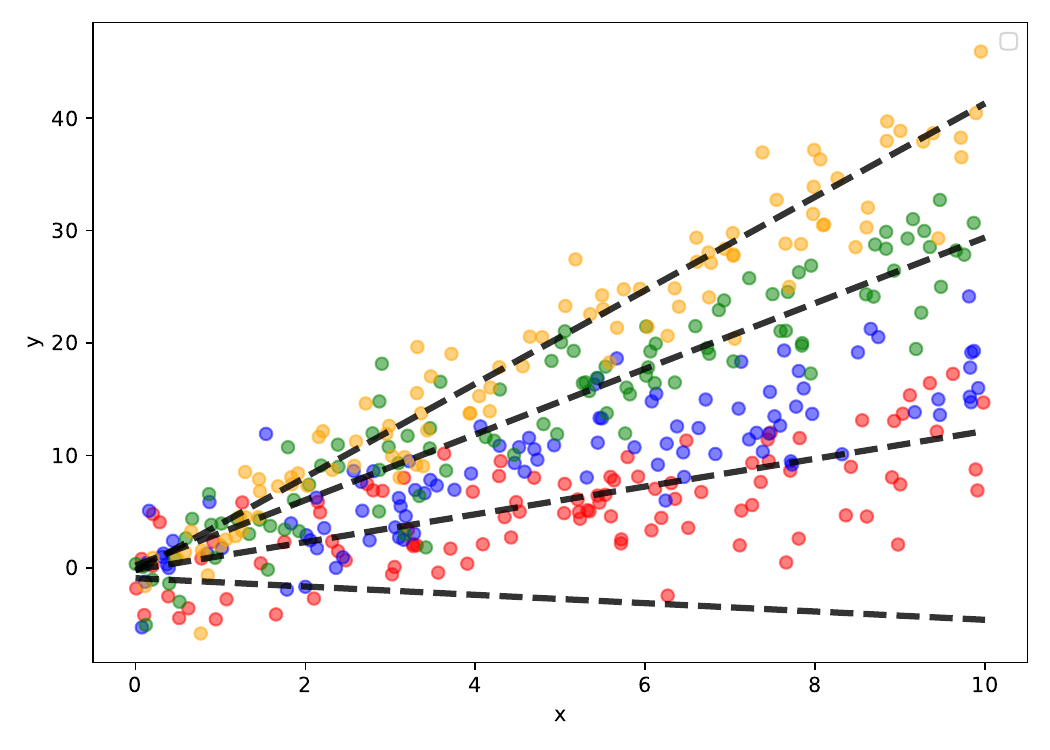}
\end{tabular} &

\begin{tabular}{c}
\includegraphics[width=0.27\textwidth]{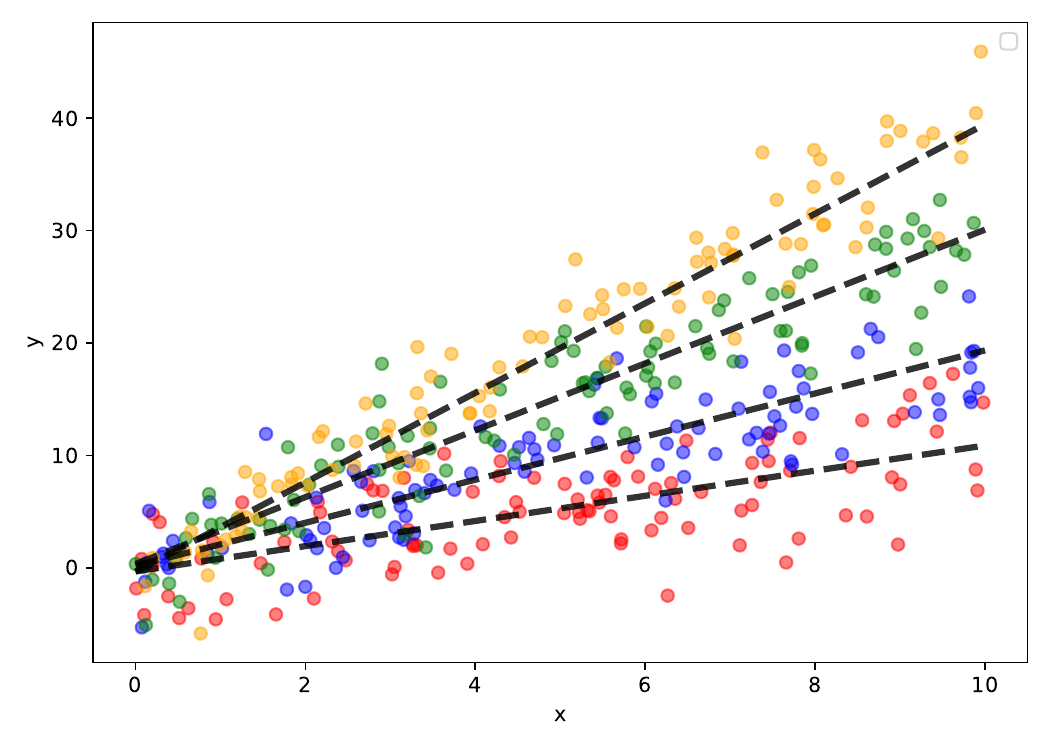}
\end{tabular}
\\

\makecell[c]{Imbalance + \\ Easy to cluster} & 
\begin{tabular}{c}
\includegraphics[width=0.27\textwidth]{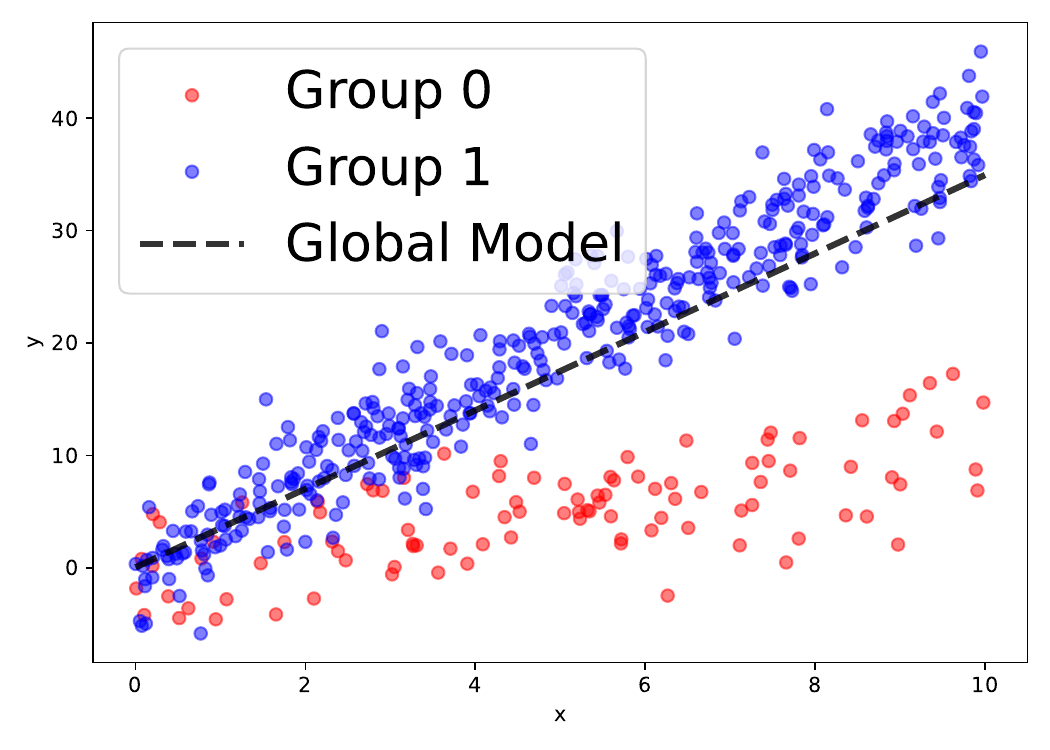}
\end{tabular} &

\begin{tabular}{c}
\includegraphics[width=0.27\textwidth]{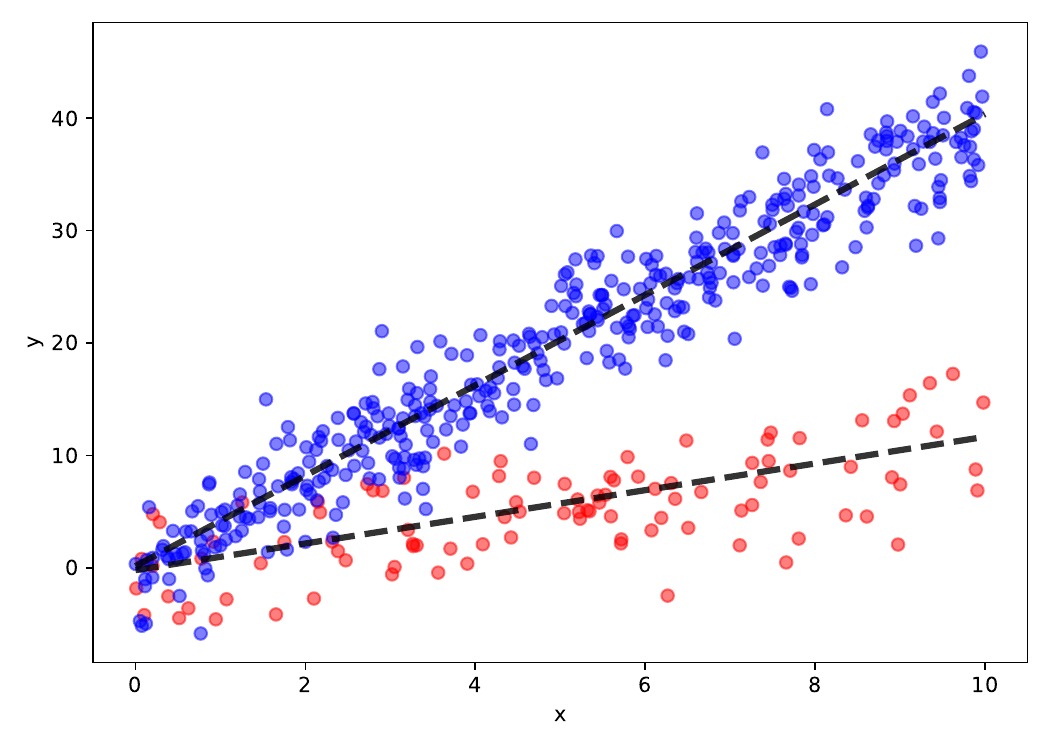}
\end{tabular} &

\begin{tabular}{c}
\includegraphics[width=0.27\textwidth]{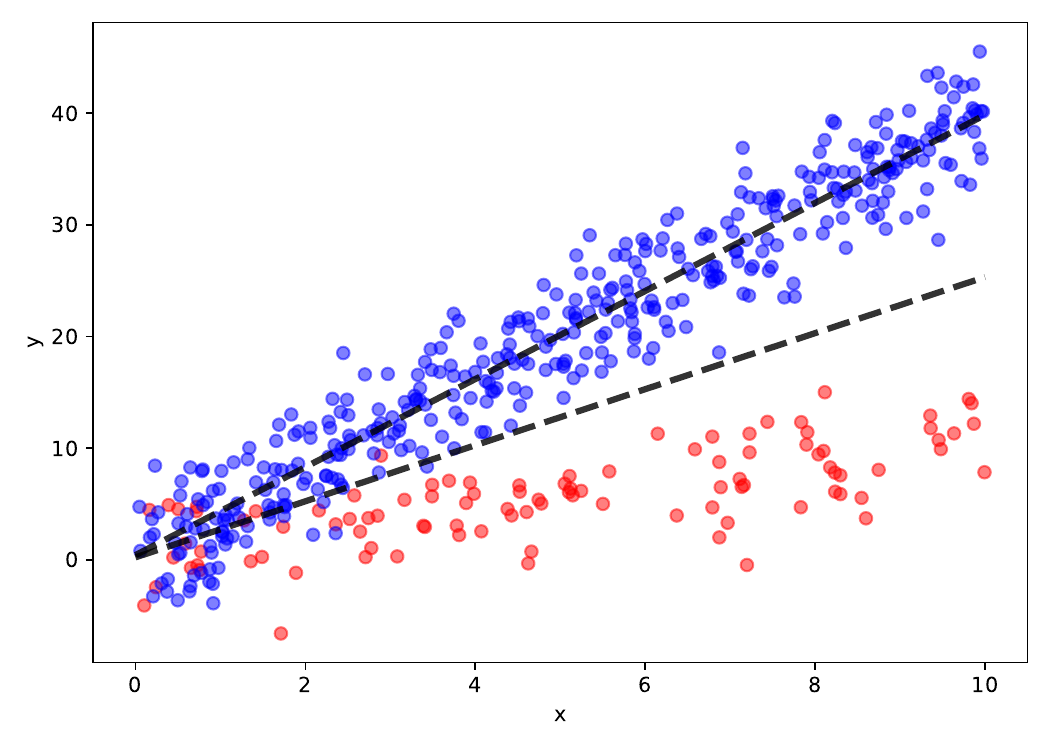}
\end{tabular} 
\\

\makecell[c]{Imbalance + \\ Hard to cluster} & 
\begin{tabular}{c}
\includegraphics[width=0.27\textwidth]{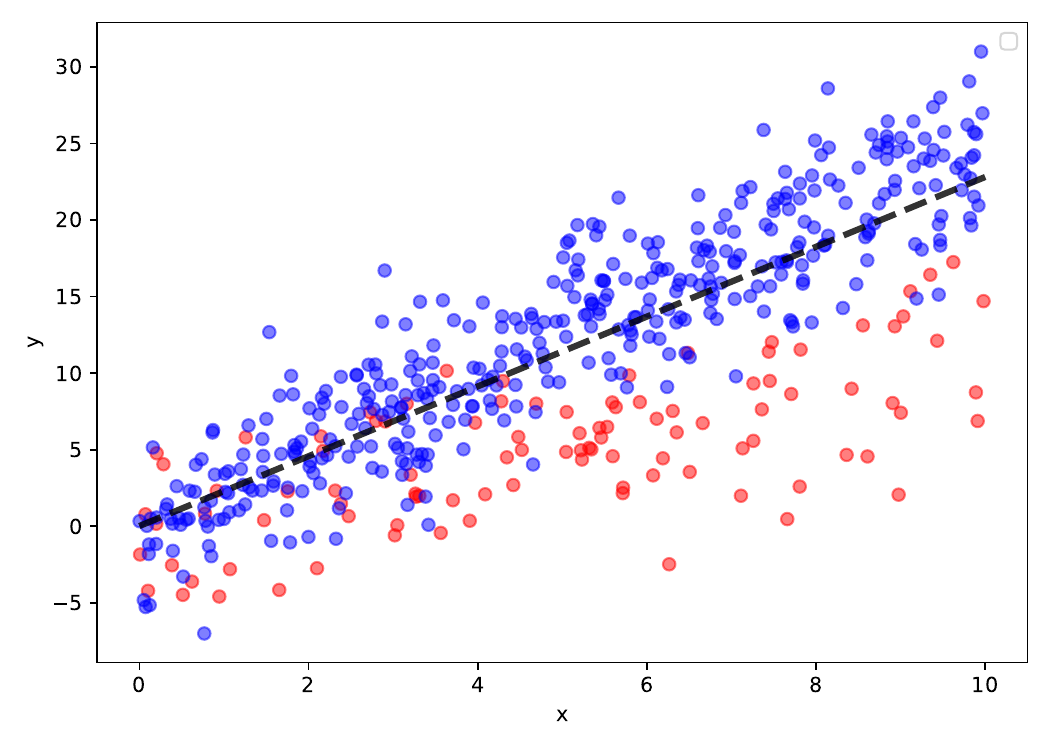}
\end{tabular} &

\begin{tabular}{c}
\includegraphics[width=0.27\textwidth]{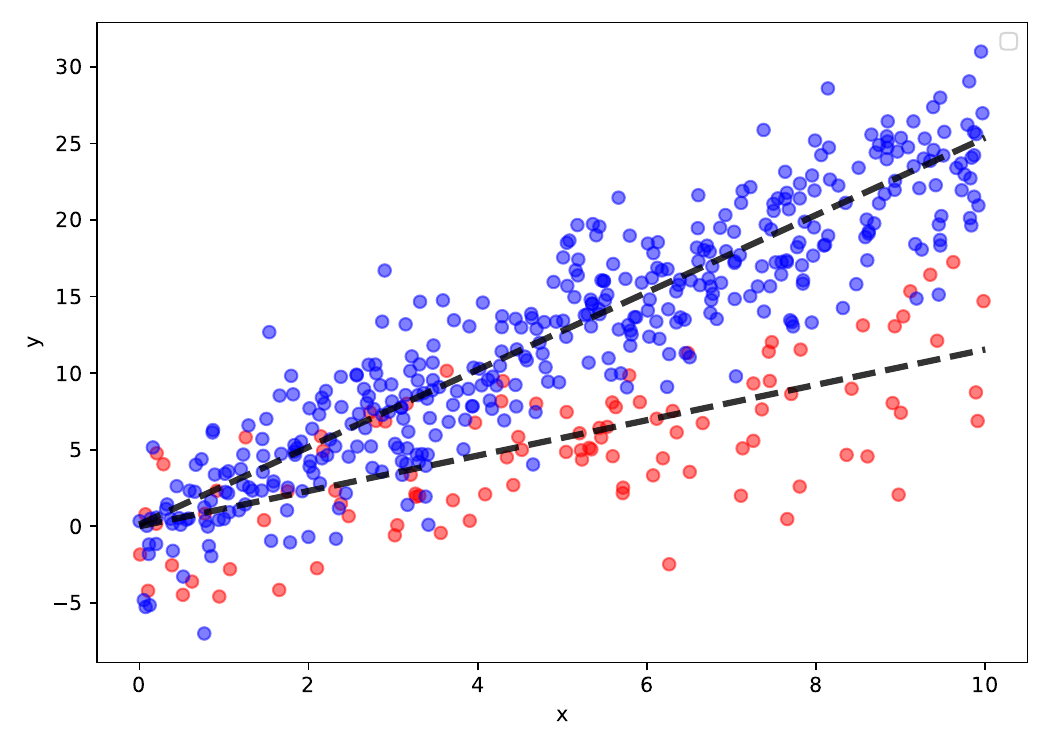}
\end{tabular} &

\begin{tabular}{c}
\includegraphics[width=0.27\textwidth]{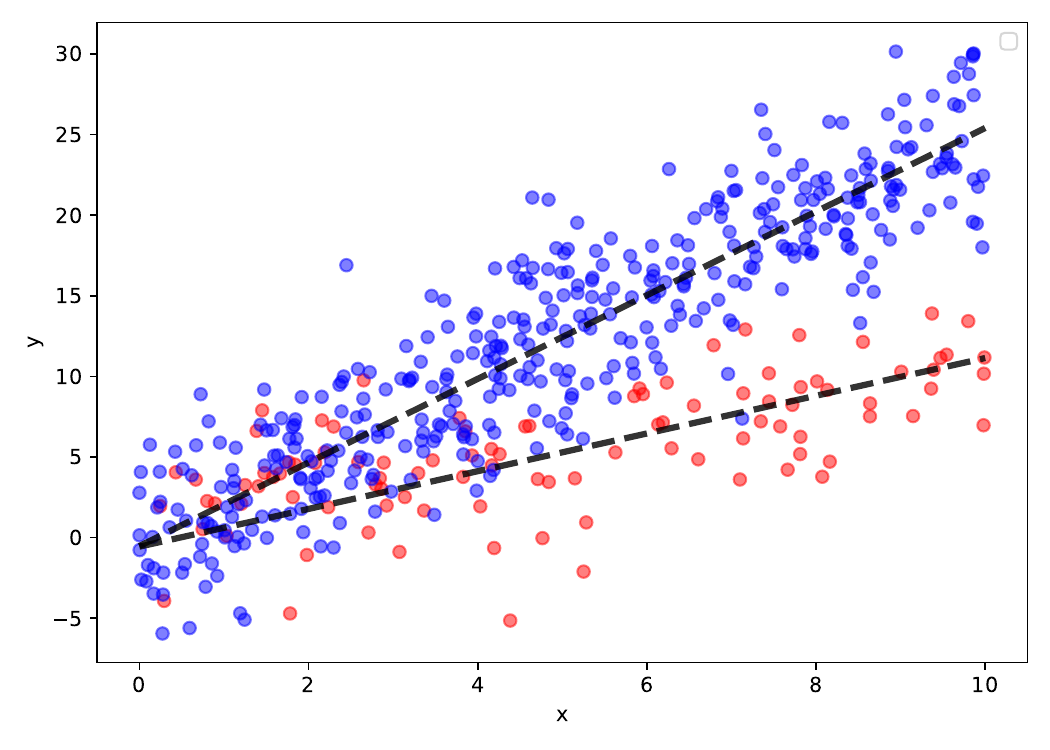}
\end{tabular} 
\\


\end{tabular}
\caption{To understand the effects of \randname in non-private settings, we simulate the balanced and imbalanced distribution of clients on the synthetic dataset and compare with other methods. The original datapoints (data generated from different noisy line are in different colors) is shown in each pictures. We plot the final global/cluster model as black dashlines in each picture.}
\label{tab:synthetic_additional}
\end{table*}

\end{document}